\documentclass[orivec]{llncs}

\newif\ifdraft
%\drafttrue
\draftfalse

\usepackage{bbding}
\usepackage{latexsym,amssymb,amsmath}
\usepackage{xspace}
\usepackage{comment}
\usepackage{url}
\usepackage{enumerate}
\usepackage[defblank]{paralist}
\usepackage{graphicx}
\graphicspath{{imgs/}}
\usepackage{subfigure}
\usepackage{microtype}
\usepackage{tabularx}
\usepackage{hyperref}
\usepackage{times}

%% Change tracking
\usepackage[normalem]{ulem} % to override default set by changes package
\ifdraft
  \usepackage{mychanges}
  \usepackage[colorinlistoftodos]{todonotes}
\else
  \usepackage[final]{mychanges}
  \usepackage[disable]{todonotes}
\fi
\setauthormarkup[right]{{\scriptsize[#1]}}
%%\setauthormarkup{\marginpar{#1}}
\definechangesauthor{DC}{blue}
\definechangesauthor{AS}{orange}
\definechangesauthor{MM}{green}
\definechangesauthor{DS}{red}
%% Commands to use in latex source (where author is one of DC, AS, ...):
%% \deleted[author]{text}
%% \added[author]{text}
%% \replaced[author]{new text}{old text}

 %ario deleted
 %ario added
 %ario replaced

%%%%%%%%%%%%%%%%%%%%%%%%%% General Math

\newcommand{\A}{\mathcal{A}} 
\newcommand{\C}{\mathcal{C}} 
 \newcommand{\F}{\mathcal{F}}
 
\newcommand{\I}{\mathcal{I}} 
 \renewcommand{\L}{\mathcal{L}}
\newcommand{\M}{\mathcal{M}} 
\renewcommand{\O}{\mathcal{O}} 
 \newcommand{\R}{\mathcal{R}}
\renewcommand{\S}{\mathcal{S}} \newcommand{\T}{\mathcal{T}}
 \newcommand{\V}{\mathcal{V}}

\newcommand{\mf}{\mathfrak}

%%%%%%%%%%%%%%%%%%%%%%%%%% Abbreviations

%%\newcommand{\col}{\colon}
                % overline
               % underline

%% ul with space between text and line

%%%%%%%%%%%%%%%%%%%%%%%%%% Relations

%%\newcommand{\deq}{\doteq}
                   % dotted less equal

%%%%%%%%%%%%%%%%%%%%%%%%%% Spaces

\newcommand{\per}{\mbox{\bf .}}                  % period

                      % ,...,
                 % #1 ... #1
                 % #1 ... #1
    % _#1_..._#1_
    % _#1_..._#1_

%%%%%%%%%%%%%%%%%%%%%%%%%% Delimiters

\newcommand{\set}[1]{\{#1\}}                      % set

%%\newcommand{\bigset}[1]{\Bigl\{#1\Bigr\}}

                     % cardinality of a set

\newcommand{\tup}[1]{\langle#1\rangle}            % tuple

%%%%%%%%%%%%%%%%%%%%%%%%%%%%%%%%%%%%%%%%%%%%%%%%%%%%%%%%%%%%%%%%%%%%%%%%%%%%%%

%%%%%%%%%%%%%%%%%%%%%%%%%% Complexity classes

% \newcommand{\EXPTIME}{\textsc{ExpTime}\xspace}

%%%%%%%%%%%%%%%%%%%%%%%%%% DL specific commands

%%%%%%%%%%%%%%%%%%%%%%%%%% Interpretation

   % (dom^I,.^I)

\newcommand{\dom}[1][\I]{\Delta^{#1}}  % delta^I (domain of interpretation \I)
\newcommand{\Int}[2][\I]{#2^{#1}}      % #2^I    (interpretation function)
    % (#2)^I  (interpretation function)

%%%%%%%%%%%%%%%%%%%%%%%%%% Basic Concept forming operators

\newcommand{\SOME}[2]{\exists #1 \per #2}
\newcommand{\SOMET}[1]{\exists #1}

\newcommand{\INV}[1]{#1^{-}}

%%%%%%%%%%%%%%%%%%%%%%%%%%

\newcommand{\vfo}{\ensuremath{v}}
\newcommand{\vso}{\ensuremath{V}}

\newcommand{\DIAM}[1]{\langle - \rangle #1}

\newcommand{\ISA}{\sqsubseteq}

%%%%%%%%%%%%%%%%%%%%%%%%%%%%%%%%%%%%%%%%%%%%%%%%%%%%%%%%
% Macro and personal command
%%%%%%%%%%%%%%%%%%%%%%%%%%%%%%%%%%%%%%%%%%%%%%%%%%%%%%%%

\newcommand{\limp}{\rightarrow}
%%\newcommand{\limp}{\supset}

%%%%%%%%%%% Mu-Calculus Interpretation: \mf{A}, \V

     % argument_{\V}^\mf{A} % Giuseppe
                                        % interp. and assign. fun.
 % argument_{\V}^\mf{A} % Giuseppe
                                        % interp. and modif. assign. fun.
%\newcommand{\MODA}[1]{(#1)_{\V}^\mf{A}}   % (argument)_{\V}^\mf{A} % Giuseppe
%                                        % interp. and assign. fun.
%\newcommand{\MODAX}[2]{(#1)_{\V #2}^\mf{A}} % (argument)_{\V.}^\mf{A} % Giuseppe
%                                        % interp. and
%                                        % modif. assign. fun.

\newcommand{\MODA}[1]{(#1)_{\vfo,\vso}^\mf{A}}   % (argument)_{\V}^\mf{A} %
                                % Giuseppe modified by Marco
                                        % interp. and assign. fun.

   % (argument)_{\V}^\mf{A} %
                                % Giuseppe modified by Marco
                                        % interp. and assign. fun.

\newcommand{\MODAX}[2]{(#1)_{\vfo #2,\vso}^\mf{A}}   % (argument)_{\V}^\mf{A} %
                                % Giuseppe modified by Marco
                                        % interp. and assign. fun.

\newcommand{\MODAZ}[2]{(#1)_{\vfo,\vso #2}^\mf{A}}   % (argument)_{\V}^\mf{A} %
                                % Giuseppe modified by Marco
                                        % interp. and assign. fun.

     % argument_{\V}^\mf{A} % Giuseppe
                                        % interp. and assign. fun.
   % (argument)_{\V}^\mf{A} % Giuseppe
                                        % interp. and assign. fun.

%%%%%%%%%%%%%%%%%%%%%%%%%%%%%%%%%%%%%%%%%%%%%%%%%%%%%%%%%%%%%%%%%%%%%%%%%%%%%%%

\newcommand{\true}{\mathsf{true}}
\newcommand{\false}{\mathsf{false}}

 \newcommand{\dllite}{\textit{DL-Lite}\xspace}
 
 \newcommand{\dlliter}{\textit{DL-Lite}\ensuremath{_{\mathcal{R}}}\xspace}

\newcommand{\ans}[2][]{\mathit{ans}_{#1}(#2)}
\newcommand{\rew}[1]{\mathit{rew}(#1)}

\newcommand{\conj}{\mathit{conj}}

\newcommand{\map}[2]{#1 \rightsquigarrow #2}

% \renewcommand{\A}{A}
% \renewcommand{\T}{T}
% \renewcommand{\R}{R}
%%\renewcommand{\C}{C} %%%GGG altrimenti si confonde con i concetti

%% For KR 2012

\newcommand{\adom}[1]{\textsc{adom}(#1)}

\newcommand{\Ans}[2][]{\textsc{ans}_{#1}(#2)}

\newcommand{\abox}{\mathit{abox}}
%%\renewcommand{\do}{do}

%%% Local Variables:
%%% mode: latex
%%% TeX-master: "main"
%%% save-place: t
%%% End:

%------------------------------------------------------------- Domain

\newcommand{\CONST}{\C}

   % Set of function names

 % Herbrand universe
   % Partitioning of a given set

%------------------------------------------- Data Layer

\newcommand{\schema}{\ensuremath{\R}\xspace}
\newcommand{\DBschema}{\schema}
\newcommand{\idb}{\ensuremath{\I_0}\xspace}
%------------------------------------------- Process Layer

\newcommand{\sys}{\ensuremath{\S}\xspace}

%-------------------------------------------------------
% SGDS stuff
%-------------------------------------------------------
\newcommand{\sgds}{\mbox{SAS}\xspace}
\newcommand{\sgdsSym}{\ensuremath{\S}\xspace}

%Database

\newcommand{\DBinst}{\I}
\newcommand{\DBinit}{\idb} %initial DB
\newcommand{\VALUE}{\V}
\newcommand{\Instances}{\Gamma}

\newcommand{\mapping}{\M}
%OBDA System

\newcommand{\obdaSym}{\ensuremath{\O}\xspace}
\newcommand{\obdaTup}{\tup{\DBschema, \TBox, \mapping}}

%Rewriting
%\newcommand{\rew}{}

%Unfolding

\newcommand{\unfold}[2]{{\textsc{unfold}(#1,#2)}}

\newcommand{\qunsat}[1]{\ensuremath{\mathsf{q}_\mathsf{unsat}(#1)}}

\newcommand{\skolem}{\textsc{fs}}

%-------------------------------------------------------
% Transition System stuff
%-------------------------------------------------------
%Transition System in Relational Layer
\newcommand{\TSAs}{\Sigma}
\newcommand{\TSAt}{\Rightarrow}
\newcommand{\TSAsym}{\Upsilon^{\scriptsize{\text{R}}}} %\newcommand{\TStup}{\tup{\Delta,\R,\Sigma,s_0,\abox,{\Rightarrow}}}
\newcommand{\TSAtup}{\tup{\R, \TSAs, s_0, \db, \TSAt}}
\newcommand{\TBox}{\T}

%Transition System in Semantic Layer
\newcommand{\TSSs}{\Sigma}
\newcommand{\TSSt}{\Rightarrow}
\newcommand{\TSSsym}{\Upsilon^{\scriptsize{\text{S}}}} %\newcommand{\TStup}{\tup{\Delta,\R,\Sigma,s_0,\abox,{\Rightarrow}}}
\newcommand{\TSStup}{\tup{\TBox, \TSSs, s_0, \abox, \TSSt}}

%------------------------------------------- Properties

\newcommand{\muladom}{\ensuremath{\muL_{A}}\xspace}

\newcommand{\ctla}{ \textnormal{CTL}\ensuremath{_A}\xspace}

\newcommand{\muladl}{\ensuremath{\muladom^{{\textnormal{EQL}}}}\xspace}

\newcommand{\ctladl}{\ensuremath{\ctla^{{\textnormal{EQL}}}}\xspace}

\newcommand{\inadom}{\ensuremath{\textsc{live}}}

\newcommand{\muL}{\mu\L} % MuL - modal mu calculus - language

%\newcommand{\muLglobal}{\muL_{(\forall,\exists)^*G}}

%------------------------------------------- Artifact System Syntax

   % Interpretation  of skolem terms
 % terms used in a given
                                % partitioning

       % Set of equality constraints
          % An equality constraint.
 % History of a artifact system
                                % instance

% _________

%----------------------------------------------------------------

%-----------------------------------------------------------------

% Next generations of Do, Closure, and Execute:

% \do { S | R | V }{CLOSE | OPEND | OPENN} { \I,\P, ... }

% \closure { S | R | V }{CLOSE | OPEND | OPENN} { \I,\P, ... }

% \execute { S | R | V }{CLOSE | OPEND | OPENN} { \I,\P, ... }
%------------------------------------------------------------------

%-------------------------------------------------- Macros of the example
%--- NonDet Schema

%--- Attributes

%--- Values

%--- Det Values

%--- Det Variables

%--- NonDeterministic Service Calls

%-- my effect shortcut

%--- Deterministic Service Calls

%--- NonDet Actions

%---Det Actions

%-----------------------------------------------------------------

%\renewcommand{\mf}[1]{{\Upsilon(\mathcal{#1})}}
\renewcommand{\mf}[1]{\Upsilon}
\newcommand{\db}{\mathit{db}}

%%% END

%% Macros by Marco
\newcolumntype{R}{>{\raggedleft\arraybackslash}X}
\newcolumntype{L}{>{\raggedright\arraybackslash}X}
\newcolumntype{C}{>{\centering\arraybackslash}X}

%{\ensuremath{\widehat{\C_{#1}}}\xspace}

%\newcommand{\boxdiamond}{\ensuremath{\medbox \hspace{-.62em} \diamond}\xspace}

% \newcommand{\rewritef}[2]{\ensuremath{\textsc{rf}^{#1}_{#2}}}
% \newcommand{\rewritets}[2]{\ensuremath{\textsc{modTS}^{#1}_{#2}}}

%\renewcommand{\entailment}[4]{#2 \in (#3)^{#1}_{#4}}

%\renewcommand{\nentailment}[4]{#2 \not\in (#3)^{#1}_{#4}}

%\DeclareSymbolFont{extraup}{U}{zavm}{m}{n}
%\DeclareMathSymbol{\varheart}{\mathalpha}{extraup}{86}
%\DeclareMathSymbol{\vardiamond}{\mathalpha}{extraup}{87}
%\DeclareSymbolFont{lasy}{U}{lasy}{m}{n}
%\DeclareMathSymbol\BlackDiamond{\mathord}{lasy}{"12}
%\DeclareSymbolFont{pxsymbolsC}{U}{pxsyc}{m}{n}
%\SetSymbolFont{pxsymbolsC}{bold}{U}{pxsyc}{bx}{n}
%\DeclareMathSymbol{\Diamondblack}{\mathrel}{pxsymbolsC}{95}
%\DeclareMathSymbol{\Squareblack}{\mathrel}{pxsymbolsC}{4}

%\newcommand{\Diamondblack}{\rotatebox[origin=c]{45}{$\Diamond$}}

%% ------------------------------------- New introduced by Babak & Alin

%%------------------------------------- Texts

%%%%%%%%%%%%%%%%%%%%%%%%% Macros regarding the proof for Semantic Artifacts
%%\newcommand{\DOSMP}[1]{\textsc{do}\ensuremath{_\textsc{SYN}}(#1)\xspace}

%\newcommand{\applySyn}[1]{\textsc{apply}\ensuremath{_\textsc{SYN}}(#1)\xspace}
% \newcommand{\conservativeTS}{conservatice}

%------------------------------------------- KAB
 % TBox
 % Initial ABox
 % action set
 % action set
 % set of process
 % set of process

%%%%%%%%%%%%%%%%%%%%%%%%%%%%%%%%%%%%%%%%%%%%%%%%%%%%%%%%
% Macro and personal command
%%%%%%%%%%%%%%%%%%%%%%%%%%%%%%%%%%%%%%%%%%%%%%%%%%%%%%%%

\renewcommand{\A}{A}
\renewcommand{\T}{T}
\renewcommand{\R}{R}

%%%%%%%%%%%%%% GSM %%%%%%%%%%%%%%%%% Riccardo

%%%%% OBDA

\newcommand{\disjoint}[2]{\ensuremath{\mathsf{disjoint}(#1,#2)}}

\newcommand{\val}[1]{\mathit{val}}

\newcommand{\evaluate}{\mathit{eval}}

%%%%%%%%%%%%%%%%%%%%%%%%%%%%%%%%%%%%%%%%%%%%%%%%%%%%%%%%%%%%%%%%%%%%%%%%%
% MACRO FOR SYSTEM ARCHITECTURE
%%%%%%%%%%%%%%%%%%%%%%%%%%%%%%%%%%%%%%%%%%%%%%%%%%%%%%%%%%%%%%%%%%%%%%%%%

%-------------------------------------------------
% Macro for file extension

% END OF Macro for file extension
%-------------------------------------------------

%-------------------------------------------------
% Macro for Library Name

\newcommand{\ontop}{\textsc{-ontop-}\xspace}
\newcommand{\systemName}{OBGSM\xspace}
\newcommand{\marianoSystem}{\ontop}

\newcommand{\Jena}{Apache Jena$^{\textsc{TM}}$\xspace}

% END OF Macro for Library Name
%-------------------------------------------------

%-------------------------------------------------
% Macro for temporal property

\newcommand{\AG}{\textsc{AG}}
\newcommand{\EG}{\textsc{EG}}
\newcommand{\AF}{\textsc{AF}}
\newcommand{\EF}{\textsc{EF}}
\newcommand{\AX}{\textsc{AX}}
\newcommand{\EX}{\textsc{EX}}

\newcommand{\Auntil}[2]{\textsc{A}~(#1~\textsc{until}~#2)}
\newcommand{\Euntil}[2]{\textsc{E}~(#1~\textsc{until}~#2)}

\newcommand{\andTemp}[2]{#1~\textsc{and}~#2}
\newcommand{\orTemp}[2]{#1~\textsc{or}~#2}
\newcommand{\orTempB}{~\textsc{or}~}
\newcommand{\notTemp}{\textsc{!}}
\newcommand{\implTemp}[2]{#1~\texttt{->}~#2}
\newcommand{\implTempB}{~\texttt{->}~}

\newcommand{\existsTemp}{\textsc{exists}\xspace}
\newcommand{\forallTemp}{\textsc{forall}\xspace}

\newcommand{\existsQuantification}{{\tt eQuantification}\xspace}
\newcommand{\forallQuantification}{{\tt fQuantification}\xspace}

\newcommand{\formula}{\Phi}
\newcommand{\query}{Q}

% END OF Macro for temporal property
%-------------------------------------------------

%-------------------------------------------------
% Macro for SPARQL

\newcommand{\variables}{{\textbf{Var}}\xspace}

\newcommand{\sparql}{SPARQL 1.1\xspace}

\newcommand{\sparqlSELECT}{\textsc{Select}\xspace}

\newcommand{\sparqlWHERE}{\textsc{Where}\xspace}

% END OF  Macro for SPARQL
%-------------------------------------------------

%-------------------------------------------------
% Macro for Mapping File

\newcommand{\expression}{\psi}

\newcommand{\notGSM}{{\tt !}}
\newcommand{\andGSM}{\textsc{and}}
\newcommand{\orGSM}{\textsc{or}}

\newcommand{\getGSM}{\textbf{get}}
\newcommand{\forallGSM}{\textbf{forall}}
\newcommand{\existsGSM}{\textbf{exists}}

\newcommand{\variableGSM}{variable}
\newcommand{\constantGSM}{{\tt constant}}

\newcommand{\aopGSM}{\textbf{aop}}
\newcommand{\lopGSM}{\textbf{lop}}

\newcommand{\GSMisMSachieved}{\mathit{GSM.isMilestoneAchieved}}

%%% Local Variables:
%%% mode: latex
%%% TeX-master: "main"
%%% save-place: t
%%% End:

\pagestyle{plain}

\sloppy

\begin{document}

\title{Verification of Semantically-Enhanced Artifact Systems\thanks{This
  research has been partially supported by the EU under the ICT Collaborative
  Project ACSI (Artifact-Centric Service Interoperation), grant agreement
  n.~FP7-257593.}}

\subtitle{Extended Version}

\author{Babak Bagheri Hariri \and Diego Calvanese \and
 Marco Montali \and \\
 Ario Santoso \and
 Dmitry Solomakhin
}

\authorrunning{B.~Bagheri~Hariri, D.~Calvanese, M.~Montali, A.~Santoso,
 D.~Solomakhin}

\institute{KRDB Research Centre for Knowledge and Data, Free University of Bozen-Bolzano\\
 \email{\textit{lastname}@inf.unibz.it} }

\maketitle

\begin{abstract} 

  Artifact-Centric systems have emerged in the last years as a suitable framework to model business-relevant entities, by combining their static and dynamic aspects. In particular, the Guard-Stage-Milestone (GSM) approach has been recently proposed to model artifacts and their lifecycle in a declarative way. In this paper, we enhance GSM with a Semantic Layer, constituted by a full-fledged OWL 2 QL ontology linked to the artifact information models through mapping specifications. The ontology provides a conceptual view of the domain under study, and allows one to understand the evolution of the artifact system at a higher level of abstraction.  In this setting, we present a technique to specify temporal properties expressed over the Semantic Layer, and verify them according to the evolution in the underlying GSM model. This technique has been implemented in a tool that exploits state-of-the-art ontology-based data access technologies to manipulate the temporal properties according to the ontology and the mappings, and that relies on the GSMC model checker for verification.

\end{abstract}

\section{Introduction}
\label{sec:introduction}

In the last decade, the marriage between processes and data has been
increasingly advocated as a key objective towards a comprehensive modeling and
management of complex enterprises \cite{CaDM13}. This requires to go beyond
classical (business) process specification languages, which largely leave the
connection between the process dimension and the data dimension underspecified,
and to consider data and processes as ``two sides of the same coin''
\cite{Reic12}.  In this respect, artifact-centric systems \cite{NiCa03,Hull08}
have lately emerged as an effective framework to model business-relevant
entities, by combining in a holistic way their static and dynamic
aspects. Artifacts are characterized by an "information model", which maintains
the artifact data, and by a lifecycle that specifies the allowed ways to
progress the information model.  Among the different proposals for
artifact-centric process modelling, the Guard-Stage-Milestone (GSM) approach
has been recently proposed to model artifacts and their lifecycle in a
declarative, flexible way \cite{HDDF*11}. GSM is equipped with a formal
execution semantics \cite{DaHV13}, which unambiguously characterizes the
artifact progression in response to external events. Notably, several key
constructs of the emerging OMG standard on Case Management and Model Notation
\footnote{\url{http://www.omg.org/spec/CMMN/}} have been borrowed from GSM.

Despite the tight integration between the data and process component, the
artifact information model typically relies on relatively simple structures,
such as (nested) lists of key-value pairs.  This causes an abstraction gap
between the high-level, conceptual view that business stakeholders have of
domain-relevant entities and relations, and the low-level representation
adopted inside artifacts.  To overcome this problem, in \cite{CDLMS12} it is
proposed to enhance artifact systems with a Semantic Layer, constituted by a
full-fledged ontology linked to the artifact information models through mapping
specifications. On the one hand, the ontology provides a conceptual view of the
domain under study, and allows one to understand the evolution of the artifact
system at a higher level of abstraction. On the other hand, mapping
specifications allow one to connect the elements present in the Semantic Layer
with the concrete data maintained in the artifact information models, relying
on an approach recently proposed in the context of Ontology-Based Data Access
(OBDA) \cite{CDLL*09}.

To specify the ontology, we adopt the OWL~2~QL profile \cite{MCHW*12} of the
standard Web Ontology Language (OWL) \cite{BaEA12}, since it supports efficient
access to data while fully taking into account the constraints expressed in the
ontology.  Specifically, OWL~2~QL enjoys so-called \emph{first-order
 rewritability} of query answering \cite{CDLLR07}, which guarantees that
conjunctive queries posed over the ontology can be rewritten into first-order
queries that incorporate the ontological constraints, and thus do not require
further inference for query answering.  A rewritten query can then be
\emph{unfolded} with respect to the mappings, so as to obtain a query
formulated over the underlying data that can be directly evaluated so as to
obtain the so-called \emph{certain answers} to the original query over the
ontology \cite{PLCD*08}.

We follow here an approach that is similar to the one proposed in
\cite{CDLMS12}, and that is based on specifying in terms of the Semantic Layer
dynamic/temporal laws that the system should obey, and that need to be verified
according to the evolution in the underlying artifact layer.  However,
differently from \cite{CDLMS12}, in which the Semantic Layer is mainly used to
\emph{govern} the progression of artifacts, by forbidding the execution of
actions that would lead to violate the constraints in the ontology, here we are
primarily interested in exploiting the Semantic Layer to ease the specification
of the dynamic/temporal laws.
In this light, we extend the technique provided in \cite{CDLMS12}
% to specify rich temporal properties expressed over the Semantic Layer, and
% verify them according to the evolution in the underlying artifact layer.
% Specifically, the extension relies
by relying on a more expressive verification formalism, which supports
first-order epistemic queries embedded into an expressive temporal language,
the first-order $\mu$-calculus \cite{Emer96b}, while allowing for
quantification across states.  The latter makes it possible to predicate over
the temporal evolution of individuals, and thus represents an enhancement that
is fundamental for capturing many practical scenarios (cf.\
Section~\ref{sec:caseStudy}).

We then concretize this formal framework by relying on GSM as the artifact
model, and by exploiting state of the art technologies, on the one hand for
dealing with the ontology and the mappings, and on the other hand for
performing verification.  Specifically, we have developed the OBGSM tool (see
Section~\ref{sec:obgsmtool}), which is able to reformulate temporal properties
expressed over the ontology in terms of the underlying GSM information model,
and which adopts the state of the art OBDA system Quest \cite{RoCa12b} to
efficiently rewrite and unfold the epistemic queries embedded in the temporal
property to verify.  For the actual verification task, we rely on the recently
developed GSMC model checker for GSM \cite{BeLP12b}, which is the only model
checker currently available that is able to verify temporal formula over
artifact systems.  GSMC adopts as temporal formalism a variant of the
first-order branching time logic CTL \cite{ClGP99} with a restricted form of
quantification across states\footnote{Note that CTL can be expressed in the
 alternation-free fragment of the $\mu$-calculus \cite{Dam92,Emer96b}.}.  The
restrictions imposed by GSMC made it necessary to suitably accommodate the
mapping language and the temporal formalism over the Semantic Layer so as to
ensure that the temporal formulas resulting from rewriting and unfolding can be
effectively processed by GSMC.

%%% Local Variables:
%%% mode: latex
%%% TeX-master: "main"
%%% save-place: t
%%% End:

\section{Preliminaries}
\label{sec:preliminaries}

OWL~2~QL is a profile\footnote{In W3C terminology, a \emph{profile} is a
 sublanguage defined by suitable syntactic restrictions.} of the Web Ontology
Language OWL~2 standardized by the W3C.
% In W3C terminology, a \emph{profile} is a sublanguage of the full OWL~2,
% defined by suitable syntactic restrictions.
OWL~2~QL is specifically designed for building an ontology layer to
wrap possibly very large data sources.
% Notably, it allows for query answering over ontologies with the same data
% complexity as plain SQL query evaluation over relational databases.
Technically, OWL~2~QL is based on the description logic \dlliter, which is a
member of the \dllite family \cite{CDLLR07}, designed specifically for
effective ontology-based data access, and which we adopt in the following.

In description logics (DLs) \cite{BCMNP03}, the domain of interest is modeled
by means of \emph{concepts}, representing classes of objects, and \emph{roles},
representing binary relations between objects\footnote{For simplicity of
 presentation, we do not distinguish here between roles, corresponding to OWL~2
 object properties, and attributes, corresponding to OWL~2 data properties (see
 \cite{CDLL*09}).  But all our results would carry over unchanged.}
In \dlliter, concepts $C$ and roles $U$
% are formed according
obey to the following syntax:
\[
  B ~::=~  N ~\mid~ \SOMET{U}
  \qquad\qquad
  C ~::=~  B ~\mid~ \SOME{U}{B}
  \qquad\qquad
  U ~::=~  P ~\mid~ \INV{P}
  %% V &~::=~&  U ~\mid~ \NOT U
\]
$P$ denotes a \emph{role name}, and $\INV{P}$ an \emph{inverse role}, which
swaps the first and second components of $P$.  $N$ denotes a \emph{concept
 name}, and $B$ a \emph{basic concept}, which is either simply a concept name,
or the projection of a role $P$ on its first component ($\SOMET{P}$) or its
second component ($\SOMET{P^-}$).  In the concept $\SOME{U}{B}$, the projection
on the first (resp., second) component of $U$ can be further qualified by
requiring that the second (resp., first) component of $U$ is an instance of the
basic concept $B$.

In DLs, domain knowledge is divided into an intensional part, called
\emph{TBox}, and an extensional part, called \emph{ABox}.  Specifically, a
\dlliter \emph{ontology} is a pair $(\T,\A)$, where the TBox $\T$ is a finite
set of (concept and role) \emph{inclusion assertions} of the forms $B\ISA C$
and $U_1\ISA U_2$, and of \emph{disjointness assertions} of the form
$\disjoint{B_1}{B_2}$ and $\disjoint{U_1}{U_2}$.  Instead, the ABox $\A$ is a
finite set of \emph{facts} (also called membership assertions) of the forms
$N(c_1)$ and $P(c_1,c_2)$, where $N$ and $P$ occur in $\T$, and $c_1$ and $c_2$
are constants.
% We denote with $\const{\A}$ the set of constants appearing in $\A$.

The semantics of a \dlliter ontology is given in terms of first-order
\emph{interpretations} $\I=(\dom,\Int{\cdot})$, where $\dom$ is the
interpretation domain and $\Int{\cdot}$ is an interpretation function that
assigns to each concept $C$ a subset $\Int{C}\subseteq\dom$ and to each role
$U$ a binary relation $\Int{U}\subseteq\dom\times\dom$, capturing the intuitive
meaning of the various constructs (see~\cite{CDLLR07} for details).
%%
%% Notice that in \dlliter the unique name assumption is immaterial, since
%% there is no way of deducing facts about equality.
%%
An interpretation that satisfies all assertions in $\T$ and $\A$ is called a
\emph{model} of the ontology $(\T,\A)$, and the ontology is said to be
\emph{satisfiable} if it admits at least one model.
%%% Probably not needed
% We also say that $A$ is \emph{$T$-consistent} if $(T,A)$ is satisfiable, i.e.,
% admits at least one model, otherwise we say $A$ is \emph{$T$-inconsistent}.

\smallskip
\noindent
\textbf{Queries.}  As usual (cf.\ OWL~2~QL), answers to queries are formed by
terms denoting individuals explicitly mentioned in the ABox.  The \emph{domain
 of an ABox} $A$, denoted by $\adom{A}$, is the (finite) set of terms appearing
in $A$.
A \emph{union of conjunctive queries} (UCQ) $q$ over a KB $(T,A)$ is a FOL
formula of the form
%%
%% $\exists\vec{y}_1\per\conj_1(\vec{x},\vec{y}_1) \lor\cdots\lor
%% \exists\vec{y}_n\per\conj_n(\vec{x},\vec{y}_n)$,
%%
$\bigvee_{1\leq i\leq n}\exists\vec{y_i}\per\conj_i(\vec{x},\vec{y_i})$ with free
variables $\vec{x}$ and existentially quantified variables
$\vec{y}_1,\ldots,\vec{y}_n$.  Each $\conj_i(\vec{x},\vec{y_i})$ in
$q$ is a conjunction of atoms of the form $N(z)$, $P(z,z')$, where $N$
and $P$ respectively denote a concept and a role name occurring in
$T$, and $z$, $z'$ are constants in $\adom{A}$ or variables in
$\vec{x}$ or $\vec{y_i}$, for some $i\in\{1,\ldots,n\}$.
%
%% Notice that since UCQs are defined over a TBox $T$, the special
%% role $\eq{}{}$ cannot appear here.
%%
The \emph{(certain) answers} to $q$ over $(T,A)$ is the set $\ans{q,T,A}$ of
substitutions %\footnote{As customary, we can view each substitution simply as a
 %tuple of constants, assuming some ordering of the free variables of $q$.}
$\sigma$ of the free variables of $q$ with constants in $\adom{A}$ such that
$q\sigma$ evaluates to true in every model of $(T,A)$.
If $q$ has no free variables, then it is called \emph{boolean} and its certain
answers are either $\true$ or $\false$.

We compose UCQs using ECQs, i.e., queries of the query language
\textit{EQL-Lite}(UCQ)~\cite{CDLLR07b}, which is the FOL query language whose
atoms are UCQs evaluated according to the certain answer semantics above. An
\emph{ECQ} over $T$ and $A$ is a possibly open formula of the form
\[
  Q ~:=~
  [q] ~\mid~
%[\eq{x}{y}] ~\mid~
\lnot Q ~\mid~ Q_1\land Q_2 ~\mid~
  \exists x\per Q
\]
where $q$ is a UCQ.
%
% where $q$ denotes a UCQ over $\T$ and $\C$, with the proviso that
% every free variable of a negative subquery, i.e., of the form $\lnot
% Q$, must occur in a positive subquery (to guarantee
% range-restriction).
%%
%%%%%%%%%%%%%%%%%%%%%%%
% First of all, notice that what is asserted in $\eq$ is \emph{known}
% to be valid in any model of the ontology. Intuitively, $\eq(t_1,
% t_2)$ implies that $t_1$ and $t_2$ reference the same domain object.
%%%%%%%%%%%%%%%%%%%%%%%
The \emph{answer to $Q$ over $(\T,\A)$}, is the set $\Ans{Q,\T,\A}$ of
tuples of constants in $\adom{A}$ defined by composing the certain
answers $\ans{q,T,A}$ of UCQs $q$ through first-order constructs, and
interpreting existential variables as ranging over $\adom{A}$.
% $ \Ans{\exists x \per Q,T,A} =\bigcup_{t \in\adom{A}}\,
%      \Ans{Q_{x/t},T,A}$,
% % as follows:
% % \[
% %   \begin{array}{r@{~}c@{~}l}
% %     \Ans{[q],T,A} &=& \ans{q,T,A}\\
% %     \Ans{[\eq{x}{y}],T,A} &=& \{\tup{x,y} \in \adom{A} \mid (T,A)\models
% %     x\=y\}\\
% %     %% \ans{\eq(x,y),T,A};\\
% %     \Ans{\lnot Q,T,A} &=& \adom{A} \setminus \Ans[]{Q,T,A}\\
% %     \Ans{Q_1 \land Q_2,T,A} &=& \Ans{Q_1,T,A} \cap \Ans{Q_2,T,A}\\
% %   \Ans{\exists x \per Q,T,A} &=&\bigcup\limits_{t \in\adom{A}}\,
% %     \Ans{Q_{x/t},T,A}\\
% %  \end{array}
% % \]
%  where $Q_{x/t}$ is obtained from $Q$ by substituting each
%  occurrence of $x$ with $t$.

Finally, we recall that \dlliter enjoys the \emph{FO rewritability} property,
which states that for every UCQ $q$, $\ans{q,\T,\A} =
\ans{\textsc{rew}(q),\emptyset,\A}$, where $\textsc{rew}(q)$ is a UCQ computed by the
reformulation algorithm in \cite{CDLL*09}. Notice that this algorithm can be
extended to ECQs \cite{CDLLR07b}, and that its effect is to ``compile away''
the TBox. Similarly, ontology satisfiability is FO rewritable for \dlliter
TBoxes \cite{CDLL*09}, which states that for every TBox $\T$, there exists a
boolean first-order query $\qunsat{\T}$ such that for every non-empty ABox
$\A$, we have that $(\T,\A)$ is satisfiable iff
$\ans{\qunsat{\T},\T,\A}=\false$.

% \todo[inline]{\textbf{TODO:} Explain OBDA?}

\smallskip
\noindent
\textbf{Ontology-Based Data Access (OBDA).}
\label{sec:OBDASystem}
In an OBDA system, a relational database is connected to an ontology
representing the domain of interest by a mapping, which relates database values
with values and (abstract) objects in the ontology (cf.\ \cite{CDLL*09}).
In particular, we make use of a countably infinite set  $\VALUE$ of
values and a set $\Lambda$ of function symbols, each with an
associated arity. We also define the set $\CONST$ of constants as the
union of  $\VALUE$ and the set
$\{f(d_1,\ldots,d_n) \mid f \in \Lambda \text{ and } d_1,\ldots,d_n \in
\VALUE \}$ of \emph{object terms}.

Formally, an OBDA system is a structure $\obdaSym = \obdaTup$, where:
\begin{inparaenum}[\it (i)]
\item $\DBschema =\set{R_1, \ldots, R_n}$ is a database schema, constituted by
  a finite set of relation schemas;
  % and $\DBinit$ is the initial database instance that conforms to
  % $\DBschema$.
\item $\TBox$ is a \dlliter TBox;
\item $\mapping$ is a set of mapping assertions, each of the form
%\begin{center}
  $\map{\Phi(\vec{x})}{\Psi(\vec{y},\vec{t})}$,
%\end{center}
  where:
  \begin{inparaenum}
  \item $\vec{x}$ is a non-empty set of variables,
  \item $\vec{y} \subseteq \vec{x}$,
  \item $\vec{t}$ is a set of object terms of the form $f(\vec{z})$, with $f
    \in \Lambda$ and $\vec{z} \subseteq \vec{x}$,
  \item $\Phi(\vec{x})$, which also called as \emph{source query}, is an arbitrary SQL query over $\DBschema$, with
    $\vec{x}$ as output variables, and
  \item $\Psi(\vec{y},\vec{t})$, which also called as \emph{target query}, is a CQ over $\TBox$ of arity $n >0$ without
    non-distinguished variables, whose atoms are over the variables $\vec{y}$
    and the object terms $\vec{t}$.
  \end{inparaenum}
\end{inparaenum}

Given a database instance $\DBinst$ made up of values in $\VALUE$ and
conforming to schema $\DBschema$, and given a mapping $\mapping$, the
\emph{virtual ABox} generated from $\DBinst$ by a mapping assertion
$m=\map{\Phi(x)}{\Psi(y,t)}$ in $\mapping$ is
$m(\DBinst)=\bigcup_{v\in\evaluate(\Phi,\DBinst)} \Psi[x/v]$, where
$\evaluate(\Phi,\DBinst)$ denotes the evaluation of the SQL query $\Phi$ over
$\DBinst$, and where we consider $\Psi[x/v]$ to be a set of atoms (as opposed
to a conjunction).  The ABox generated from $\DBinst$ by the mapping $\mapping$
is $\mapping(\DBinst)=\bigcup_{m \in \mapping} m(\DBinst)$.
Notice that $\adom{\mapping(\DBinst)} \subseteq \CONST$.
As for ABoxes, the active domain $\adom{\DBinst}$
of a database instance $\DBinst$ is the set of values occurring in
$\DBinst$. Notice that $\adom{\DBinst} \subseteq \VALUE$.
%
% Given a database instance $\DBinst$ (an ABox $\ABox$ resp.), its active
% domain $\adom{\DBinst}$ ($\adom{\ABox}$) is the subset of $\VALUE$ (of
% \todo{Dobbiamo fissare il dominio degli oggetti!}\ldots) such that
% $c\in\adom{\DBinst}$ iff $c$ occurs in $\DBinst$ ($c\in\adom{\ABox}$ iff $c$
% occurs in $\ABox$).
%
Given an OBDA system $\obdaSym = \obdaTup$ and a database instance
$\DBinst$ for $\DBschema$, a \emph{model} for $\obdaSym$ wrt $\DBinst$ is a
model of the ontology $(\TBox,\mapping(\DBinst))$. We  say that
$\obdaSym$ wrt $\DBinst$ is satisfiable if it admits a model wrt $\DBinst$.

A UCQ $q$ over an OBDA system $\obdaSym = \obdaTup$ and a relational
instance $\DBinst$ for $\DBschema$ is simply an UCQ over
$(\TBox,\mapping(\DBinst))$. To compute the certain answers of $q$
over $\obdaSym$ wrt $\DBinst$, we follow the standard three-step approach~\cite{CDLL*09}:
\begin{inparaenum}[\it (i)]
\item $q$ is \emph{rewritten} to compile away $\TBox$, obtaining $q_r =
  \rew{q,\TBox}$;
\item the mapping $\mapping$ is used to \emph{unfold} $q_r$ into a query over
  $\DBschema$, denoted by $\unfold{q_r}{\mapping}$, which turns out to be an
  SQL query \cite{PLCD*08};
\item such a query is executed over $\DBinst$, obtaining the certain answers.
\end{inparaenum}
For an ECQ, we can proceed in a similar way, applying the rewriting and
unfolding steps to the embedded UCQs.  It follows that computing certain
answers to UCQs/ECQs in an OBDA system is FO rewritable. Applying the
unfolding step to $\qunsat{\TBox}$, we obtain also that satisfiability in
$\obdaSym$ is FO rewritable.

%%%%%%%%%%%%%%%%%%%%%%%%%%%%%%%%%%%%%%%%%%%%%%%%%%%%%%%%%%%%%%%%%%%%%%%%%%%%%%

% \todo[inline]{\textbf{TODO:} Explain GSM? }

%%% Local Variables:
%%% mode: latex
%%% TeX-master: "main"
%%% save-place: t
%%% End:

\section{Semantically-enhanced Artifact Systems}
\label{sec:sas}

In this section we introduce Semantically-enhanced Artifact Systems ($\sgds$s),
which generalize the framework of Semantically-Governed Data-Aware Processes
(SGDAPs) presented in \cite{CDLMS12}.

Intuitively, \sgds models systems in which artifacts progress according to
their lifecycles, and in which the evolution of the entire system is understood
through the conceptual lens of an OWL~2~QL ontology. In accordance with the
literature \cite{DHPV09}, it is assumed that artifacts are equipped with a
relational information model. More specifically, a \sgds is constituted by:
\begin{inparaenum}[\it (i)]
\item A \emph{Relational Layer}, which account for the
  (relational) information models of the artifacts, and which employs a
  global transition relation to abstractly capture the step-by-step
  evolution of the system as a whole.
\item A \emph{Semantic Layer}, which contains an OWL~2~QL ontology
  that conceptually accounts for the domain under study.
\item A set of \emph{mapping assertions} describing how to
  virtually project data concretely maintained at the Relational Layer
  into concepts and relations modeled in the Semantic Layer, thus
  providing a link between the artifact information models and the ontology.
%
  % We assume that the Artifact Layer exploits relational databases to
  % maintain the data of interest, and can be hence queried by means
  % of SQL statements; in this context context, each mapping assertion
  % relates an arbitrary (SQL) query over the databases of the
  % artifacts to a set of atoms whose predicates are the concepts and
  % roles of the ontology, and whose arguments are terms built using
  % specific function symbols applied to the answer variables of the
  % SQL query.  Such mappings specify how to populate the elements of
  % the ontology
  % from the data in the relational database, and the function symbols
  % are used to construct (abstract) objects (which we call
  % \emph{object
  % terms}) from the concrete values retrieved from the database.
%
\end{inparaenum}
%
% Since the main focus of this investigation is on how data are
% manipulated at the Artifact Layer and understood/governed at the
% Semantic Layer, we assume, as done for $\dcds$s and $\kbds$s, that
% the artifact databases are combined into a unique database
% maintaining the whole information present at the Artifact Layer,
% which is then referred to as the \emph{Relational Layer}.
%
%

\begin{figure}[t!]
\centering
\includegraphics[width=\textwidth]{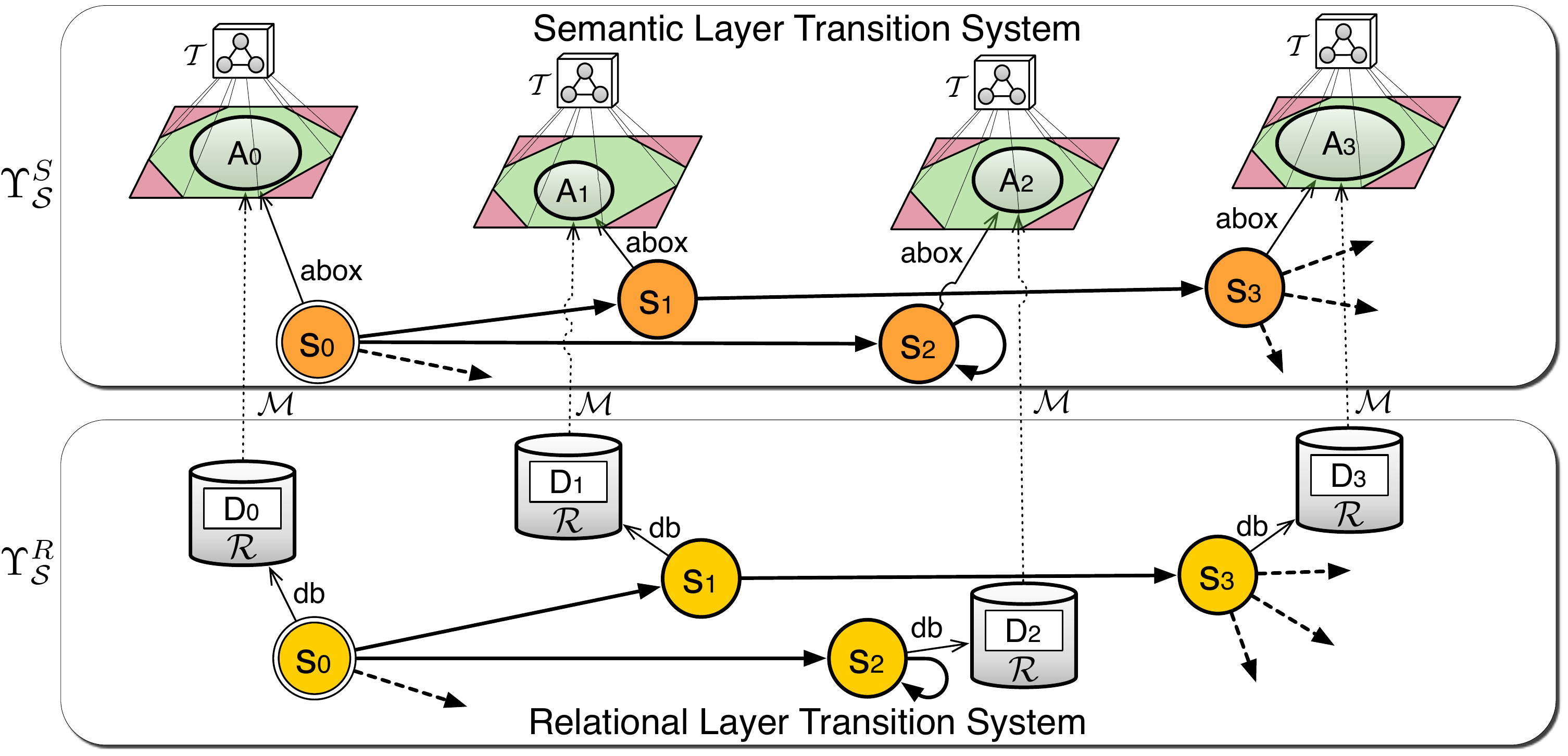}
\vspace*{-6mm}
\caption{\sgds illustration \label{fig:evolution}}
\end{figure}

%\todo[inline]{\textbf{TODO:} Explain SAS's components formal definition}

In the following, we assume a countably infinite set of values $\VALUE$.
Formally, a \sgds $\sgdsSym$ is a tuple
$\sys = \tup{\DBschema,\DBinit,\F, \TBox, \mapping}$, where:
\begin{inparaenum}[\it (i)]
\item $\DBschema$ is a database schema that incorporates the schemas
  of all artifact information models present in the Relational Layer;
\item $\DBinit$ is a database instance made up of
  values in $\VALUE$ and conforming to $\DBschema$, which represents
  the initial state of the Relational Layer,;
\item $\F\subseteq \Instances\times \Instances$ is the transition
  relation that describes the overall progression mechanism of the
  Relational Layer, where $\Instances$ is the set of all instances
  made up of values in $\VALUE$ and conforming to $\DBschema$;
%
% \item $\kabkb = \tup{\TBox,\mapping}$ is the knowledge component of
%   $\KA$, constituted by a \dllitea;
%
\item $\TBox$ is a $\dlliter$ TBox; 
\item $\mapping$ is a set of mapping assertions that connect $\DBschema$
  to $\TBox$, following the approach described in
  Section~\ref{sec:preliminaries}.
\end{inparaenum}
By observing that the triple $\tup{\DBschema,\TBox,\mapping}$
constitutes, in fact, an OBDA system, we notice that $\sys$ can be
considered as an OBDA system equipped with a transition relation that
accounts for the dynamics of the system at the level of $\DBschema$,
starting from the initial state $\DBinit$.

% Roughly speaking, the components $\DBschema,\DBinit,\F$ together
% constitute the relational layer and the components $\TBox, \mapping$
% together build up the Semantic Layer.
% %
% Also notice that, in fact the components $\DBschema,\TBox,\mapping$
% constitute OBDA system.
% For an overall picture providing the intuition for $\sgds$s, see
% Figure~\ref{fig:verification}.

%

\subsection{Execution Semantics}
\label{sec:executionSemantic}
% \todo[inline]{\textbf{TODO:} 
%   - Explain SAS's process/action execution semantic? \\
% }
% \todo[inline]{\textbf{TODO:} \\
% - Explain SAS's semantics in terms of transition systems (Semantic
% and artifact layer TS?)  }

The execution semantics of a $\sgds$ \sys is provided by means of transition
systems. While the temporal structure of such transition systems is
fully determined by the transition relation of \sys, the content of
each state in the transition system depends on whether the dynamics of
$\sgds$s are understood directly at the Relational Layer, or through
the conceptual lens of the Semantic Layer ontology. In the former
case, each state is associated to a database instance that represents
the current snapshot of the artifact information models, whereas in
the latter case each state is associated to an ABox that represents
the current state of the system, as understood by the Semantic Layer. 

Following this approach,  the execution semantics of $\sgdsSym$ is captured in terms of two
transition systems, one describing the allowed evolutions at the
Relational Layer (\emph{Relational Transition System - RTS}),
and one abstracting them at the Semantic Layer (\emph{Semantic
  Transition System} - STS). Figure~\ref{fig:evolution} provides a
graphical intuition about the RTS and STS, and their
interrelations. 

%%
% The first transition system has the form of an RTS
% (cf.~Section~\ref{sec:rel-ts}), whereas the second has the form of
% an STS (cf.~Section~\ref{sec:sem-ts}).
%%
% \noindent
% \textbf{RTS.}  The RTS $\TSAsym_\sgdsSym$ of the \sgds $\sgdsSym=\tup{\DBschema,\DBinit,\F, \TBox, \mapping}$ is
% defined as $\TSAtup$, where $\TSAs$, $\TSAt$ and $\db$ are defined by
% simultaneous induction as the smallest sets such that $s_0 \in \TSAs$,
% with $\db(s_0) = \DBinit$, and satisfying the following
% property. Given $s \in \TSAs$, consider every database instance
% $\DBinst'$ such that $\tup{\db(s),\DBinst'} \in \F$. Then:
% \begin{compactitem}
% \item[\it (i)] if there exists $s' \in \TSAs$ such that $\db(s') =
%   \DBinst'$, then $s \TSAt s'$;
%   \item[\it (ii)] otherwise, if $\obdaSym =
%     \tup{\DBschema,\TBox,\mapping}$ is \emph{satisfiable} wrt
%     $\DBinst'$, then $s' \in \TSAs$, $s \TSAt s'$ and $\db(s') =
%     \DBinst'$, where $s'$ is a fresh state.
% \end{compactitem}
% We observe that the satisfiability check done in the last step of the
% RTS construction accounts for \emph{semantic governance}.
%%

\smallskip
\noindent
\textbf{RTS.} 
Given a \sgds $\sgdsSym=\tup{\DBschema,\DBinit,\F, \TBox, \mapping}$,
its RTS $\TSAsym_\sgdsSym$ is defined as a tuple $\TSAtup$, where:
\begin{inparaenum}[\it (i)]
\item  $\TSAs$ is a set of states,
\item $s_0 \in \TSAs$,
\item $\db$ is a function that, given a state in $\TSAs$, returns a
  corresponding database instance (conforming to $\DBschema$),
\item $\TSAt \subseteq \TSAs \times \TSAs$ is the transition relation.
\end{inparaenum} 
The components $\TSAs$, $\TSAt$ and $\db$ of $\TSAsym_\sgdsSym$ are
defined by
simultaneous induction as the smallest sets satisfying the following conditions:
\begin{compactitem}
\item $\db(s_0)  = \DBinit$;
\item for every databases instance $\DBinst'$ such that
  $\tup{\db(s),\DBinst'} \in \F$:
   \begin{compactitem}
   \item if there exists $s' \in \TSAs$ such that $\db(s') =
     \DBinst'$, then $s \TSAt s'$;
   \item otherwise, if $\obdaSym = \tup{\DBschema,\TBox,\mapping}$ is
     \emph{satisfiable} wrt $\DBinst'$, then $s' \in \TSAs$, $s \TSAt
     s'$ and $\db(s') = \DBinst'$, where $s'$ is a fresh state.
   \end{compactitem}
\end{compactitem}

%  such that $s_0 \in \TSAs$,
% with $\db(s_0) = \DBinit$, and satisfying the following
% property. Given $s \in \TSAs$, consider every database instance
% $\DBinst'$ such that $\tup{\db(s),\DBinst'} \in \F$. Then:
% \begin{compactitem}
% \item[\it (i)] if there exists $s' \in \TSAs$ such that $\db(s') =
%   \DBinst'$, then $s \TSAt s'$;
% \item[\it (ii)] otherwise, if $\obdaSym =
%   \tup{\DBschema,\TBox,\mapping}$ is \emph{satisfiable} wrt
%   $\DBinst'$, then $s' \in \TSAs$, $s \TSAt s'$ and $\db(s') =
%   \DBinst'$, where $s'$ is a fresh state.
% \end{compactitem}
%%

\noindent
The satisfiability check done in the last step of the
RTS construction accounts for the semantic \emph{governance}
(cf.~Section \ref{sec:introduction}): a transition is preserved in the
RTS only if the target state does not violate any ontological
constraints of the Semantic Layer, otherwise it is rejected \cite{CDLMS12}.

\smallskip
\noindent
\textbf{STS.}  
Given a \sgds $\sgdsSym=\tup{\DBschema,\DBinit,\F, \TBox, \mapping}$,
its STS $\TSSsym_\sgdsSym$ is defined as a tuple $\TSAtup$, which is
similar to an RTS, except from the fact that states are attached to
ABoxes, not database instances.
 In particular, $\TSSsym_\sgdsSym$ is defined as a
``virtualization'' of the RTS $\TSAsym_\sgdsSym=\TSAtup$ at the
Semantic Layer: it maintains the structure of $\TSAsym_\sgdsSym$
unaltered, reflecting that the progression of the system is determined
at the Relational Layer, but it associates each state to a virtual
ABox obtained from the application of the mapping specification
$\mapping$ to the database instance associated by $\TSAsym_\sgdsSym$
to the same state.  Formally, the transition relation $\mapping$ is
equivalent to the one of the $\sgdsSym$, and the $\abox$ function of
$\TSSsym_\sgdsSym$ is defined as follows: for each $s \in \TSSs$, 
$\abox(s) = \mapping(\db(s))$.

\section{Verification of Semantically-Enhanced Artifact Systems}
\label{sec:verification}
Given a \sgds \sgdsSym, we are interested in studying verification of semantic
dynamic/temporal properties specified over the Semantic Layer, i.e., to be
checked against the STS $\TSSsym_\sgdsSym$. As verification formalism, we
consider a variant of first-order $\mu$-calculus \cite{Emer96b,Stir01},
called $\muladl$ \cite{BCMD*13,CKMSZ13}. We observe that $\mu$-calculus is one
of the most powerful temporal logics: it subsumes LTL, PSL, and CTL*
\cite{ClGP99}. The logic $\muladl$ extends propositional $\mu$-calculus by
allowing to query the states of the STS using the first-order epistemic queries
introduced in Section~\ref{sec:preliminaries}. In $\muladl$, first-order
quantification is restricted to objects that are present in the current ABox,
and can be used to relate objects across states.
The syntax of \muladl is as follows:
%{\footnotesize
\[
  \Phi ::= Q ~\mid~ \lnot \Phi ~\mid~ \Phi_1 \land \Phi_2 ~\mid~ \exists
  x.\inadom(x) \land \Phi ~\mid~ \DIAM{\Phi} ~\mid~ Z ~\mid~ \mu Z.\Phi
\]
%}%
%%
Where $Q$ is an ECQ over $\TBox$, $Z$ is a second-order variable denoting a
0-ary predicate, $\mu$ is the least fixpoint operator, and the special
predicate $\inadom(x)$ is used to indicate that $x$ belongs to the current
active domain, i.e., it is mentioned in some concept or role of the current
ABox. For a detailed semantics of \muladl, refer to
\cite{BCMD*13,CKMSZ13}.

% \todo{Two examples of properties?}
 
\begin{comment}
%%
We make use of the usual abbreviation, including $\forall x.\inadom(x) \limp
\Phi = \neg (\exists x.\inadom(x) \land \neg \Phi)$.

To define the semantics of $\muladl$ we resort to STSs.
Let $\Upsilon = \TSStup$ be an STS.
% $\mf{A}=\Upsilon_\S$ be the transition system for $\S=(\T,\A_0,\R,
% \Pi)$. We denote by $\Sigma_\mf{A}$ the states of $\mf{A}$.
Let $\vso$ be a predicate and individual variable valuation on $\Upsilon$,
i.e., a mapping from the predicate variables $Z$ to subsets of the
states $\TSSs$, and from individual variables to constants
in $\adom{\mapping(\DBinit)}$.
%
Then, we assign meaning to $\muladl$ formulas by associating to
$\Upsilon$ and $\vso$ an \emph{extension function} $\MODA{\cdot}$, which
maps $\muladl$ formulas to subsets of $\TSSs$.
%
The extension function  $\MODA{\cdot}$ is defined inductively as:

%\begin{figure}[t]
{\footnotesize{\[
  \begin{array}{rcl}
    \MODA{Q} &=& \{s \in \Sigma\mid \ans{Q\vfo,\db(s)} \}\\
    \MODA{\lnot \Phi} &=& \Sigma \setminus \MODA{\Phi} \\
    \MODA{\Phi_1 \land \Phi_2} &=&\MODA{\Phi_1}\cap\MODA{\Phi_2}\\
    \MODA{\exists x. \Phi} &=& \{ s \in \Sigma \mid\exists t. t \in
    \Delta\mbox{ and } s \in\MODAX{\Phi}{[x/t]}\}\\
    \MODA{\DIAM{\Phi}} &=& \{s \in \Sigma\mid\exists s'.s \Rightarrow s' \text{
     and } s' \in \MODA{\Phi}\} \\
    \MODA{Z} &=& \vso(Z)\\
    \MODA{\mu Z.\Phi} &=&
     \bigcap\{\S\subseteq\Sigma \mid \MODAZ{\Phi}{[Z/\S]} \subseteq \S
     \} \\
    % \MODA{\nu Z.\Phi} &=& \bigcup\{\S\subseteq\Sigma
    % \mid \S \subseteq \MODAZ{\Phi}{[Z/\S]}\}
     \MODA{\inadom(x)} &=&  \{s \in \Sigma\mid x/d \in \vfo \text{ implies }
     d \in \adom{\db(s)}\}
   \end{array}
\]
}}
%  \caption{\label{fig:mul-semantics} Semantics of $\mu\L$}
%\end{figure}

\end{comment}

Given a  \sgds \sgdsSym, we show that verification
of $\muladl$ properties over the STS $\TSSsym_\sgdsSym$ can be reduced
to verification of $\muladom$ \cite{CaDM13} properties over the RTS
$\TSAsym_\sgdsSym$, where $\muladom$ is a logic similar to $\muladl$,
except for the local formula $Q$, which is an (open) first-order query over the database
schema in the Relational Layer.

\begin{theorem}
\label{thm:verification}
For every \sgds \sgdsSym and \muladl property $\Phi$, there exists a
\muladom property $\Phi'$ such that $\TSAsym_\sgdsSym$ satisfies
$\Phi$ if and only if $\TSSsym_\sgdsSym$ satisfied $\Phi'$.
\end{theorem}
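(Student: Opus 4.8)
The plan is to exploit the fact that the RTS $\TSAsym_\sgdsSym$ and the STS $\TSSsym_\sgdsSym$ share exactly the same transition structure --- the same set of states $\Sigma$, the same initial state $s_0$, and the same transition relation $\Rightarrow$ --- and differ only in what decorates each state: the RTS attaches the database instance $\db(s)$, whereas the STS attaches the virtual ABox $\abox(s)=\mapping(\db(s))$. Because of this, the reduction can be carried out purely at the level of the \emph{local} subformulas, translating each query evaluated over the Semantic Layer into a first-order query evaluated over the Relational Layer, while leaving the modal and fixpoint structure untouched. Here $\Phi$ is the given \muladl property, evaluated over the STS, and $\Phi'$ is the \muladom property to be constructed, evaluated over the RTS.

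First I would define the translation $\Phi \mapsto \Phi'$ by induction on the structure of $\Phi$. The Boolean connectives, the second-order variables $Z$, the fixpoint operator $\mu Z.\Phi$, and the modality $\DIAM{\Phi}$ are mapped homomorphically, as they depend only on the shared transition structure. For a local ECQ $Q$, I would set its translation to $\unfold{\rew{Q,\TBox}}{\mapping}$, i.e.\ apply the standard OBDA three-step machinery --- rewrite with respect to $\TBox$ (compiling it away by FO rewritability), then unfold through $\mapping$ --- obtaining a first-order query over $\DBschema$ (rewriting and unfolding being applied to the UCQs embedded in the ECQ). Correctness of this single step is exactly the OBDA property recalled in the Preliminaries: the certain answers of $Q$ over $(\TBox,\abox(s))$ coincide with the evaluation of $\unfold{\rew{Q,\TBox}}{\mapping}$ over $\db(s)$. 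Moreover, since the RTS construction discards every successor whose ontology is unsatisfiable, all reachable states carry a satisfiable ontology, so certain-answer semantics behaves as intended and no separate consistency guard is needed in the translation.

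The delicate case --- and the main obstacle --- is first-order quantification $\exists x.\inadom(x)\land\Phi$, together with the atom $\inadom(x)$ itself. In \muladl, $x$ ranges over $\adom{\abox(s)}=\adom{\mapping(\db(s))}\subseteq\CONST$, which contains not only raw values from $\VALUE$ but also \emph{object terms} $f(\vec{d})$ produced by the mapping's Skolem-like function symbols $f\in\Lambda$; in \muladom, by contrast, quantification ranges over $\adom{\db(s)}\subseteq\VALUE$, which never contains such object terms. To bridge this gap I would exploit that $\Lambda$ is finite and that every object term occurring in any virtual ABox has the shape $f(\vec{d})$ for some $f\in\Lambda$ and some tuple $\vec{d}$ of database values. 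Accordingly, I would expand each semantic-layer quantifier into a disjunction: one disjunct quantifying over database values (the case $x\in\VALUE$) and, for every $f\in\Lambda$ of arity $k$, one disjunct $\exists x_1\cdots\exists x_k$ over database values that (i) asserts, via an unfolded first-order test over $\DBschema$, that the object term $f(x_1,\dots,x_k)$ is generated into the current ABox, and (ii) continues with the translation of $\Phi$ in which every occurrence of $x$ inside an ECQ is replaced by $f(x_1,\dots,x_k)$ before rewriting and unfolding. Substituting object terms into ECQs is precisely what unfolding is designed to resolve, since the mapping determines exactly which ABox atoms mention $f(\vec{d})$.

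Finally, I would prove the correspondence by structural induction: for every state $s$, every predicate-variable valuation, and every assignment of free individual variables that matches each object term $f(\vec{d})$ on the semantic side with its argument tuple $\vec{d}$ on the relational side, $\TSSsym_\sgdsSym, s \models \Phi$ iff $\TSAsym_\sgdsSym, s \models \Phi'$. The base case is the OBDA correctness above; the modal and fixpoint cases follow from the shared transition structure and monotonicity of the extension function; the quantifier case reduces to the object-term expansion described above. The point where I expect the real work to lie is the interaction of quantification \emph{across states} with the object-term encoding: a variable bound at one state is referenced by ECQs at later states, so the matching $x=f(\vec{d}) \leftrightarrow (x_1,\dots,x_k)=\vec{d}$ must be shown faithful along every run --- in particular, that the semantic-side fact ``$f(\vec{d})$ is live at $s'$'' is captured, uniformly in $s'$, by the unfolded liveness test evaluated on $\db(s')$.
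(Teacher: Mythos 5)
Your proposal follows essentially the same route as the paper's proof: leave the temporal and fixpoint structure untouched, rewrite and unfold the embedded ECQs as in standard OBDA, treat $\inadom(x)$ as an unfoldable query over the ontology vocabulary, and eliminate cross-state quantification over object terms by a disjunction over the mapping's function symbols with fresh value variables as arguments, concluding that the result stays in \muladom because live object terms arise only from live database values. The one minor deviation is your additional disjunct for variables bound to raw values in $\VALUE$, which the paper's unfolding formula omits since it disjoins only over $\skolem(\mapping)$ --- a harmless (arguably more careful) refinement rather than a different approach.
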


\begin{proof}[sketch]
The proof extends the one in \cite{CDLMS12}. We recap such a proof
here, and show how to extend it to the logic $\muladl$. 
The construction of $\Phi'$ is done by \emph{rewriting} and
\emph{unfolding} the $\Phi$, by suitably extending the
notions of rewriting and unfolding in OBDA to the case of temporal
properties \cite{PLCD*08}.

As for the rewriting step, the TBox $\TBox$ does not focus on the dynamics of the
system, but only on the information maintained by
$\sgdsSym$. Therefore, the rewriting of $\Phi$ wrt
the TBox $\TBox$ is done by separating the treatment of the
dynamic part, which is simply maintained unaltered, to the one of the
embedded ECQs, which is handled as in standard OBDA.

The unfolding step is managed in a similar way. Both $\TSSsym_\sgdsSym$ and $\TSAsym_\sgdsSym$
have the same structure, reflecting that the dynamics
of the SAS are determined at the Relational
Layer. Hence, the unfolding of $\rew{\Phi,\TBox}$ is handled by
maintaining the dynamic component unaltered, and unfolding all the embedded ECQs wrt the
mapping assertion $\mapping$. The correctness of this approach has
been proven in \cite{CDLMS12}, which however requires here to extend
the proof to the following two cases, related to first-order quantification:
\begin{inparaenum}[\it (i)]
\item the unfolding of $\inadom(x)$;
\item the unfolding of quantification across states, i.e., $\exists
  x.\inadom(x) \land \Phi$.
\end{inparaenum}
As for $\inadom(x)$, it can
be defined in terms of the following UCQ, which can then be unfolded
in the usual way:
{\footnotesize{
\[
\inadom(x) \equiv \bigvee_{N \text{ in } \TBox} N(x) \lor \bigvee_{P \text{ in } \TBox}
(P(x,\_) \lor P(\_,x))
\]
}}%
Quantification across states is dealt as in standard OBDA, i.e.:
{\small{
\begin{align*}
  \unfold{\exists x.\inadom(x) \land \Phi}{\mapping} =
  \textstyle\bigvee_{(f/n) \in \skolem(\mapping)} &\exists
  x_1,\ldots,x_n. 
  \unfold{\inadom(f(x_1,\ldots,x_n))}{\mapping} \\
  &\land \unfold{\Phi[x/f(x_1,\ldots,x_n)]}{\mapping}
\end{align*}
}}%
where $\skolem(\mapping)$ is the set of function symbols used to
construct object terms in $\mapping$. To conclude the proof, we need
to observe that the unfolded formula is in $\muladom$, i.e., it
preserves the property that quantification ranges over the current
active domain. This can be easily confirmed by noticing that a live
object term in a state of $\TSSsym_\sgdsSym$ can only be produced starting from
live values in the corresponding state of
$\TSAsym_\sgdsSym$. 
% that is live in a given state of the STS can be only produced starting from live values in the corresponding state of the RTS. Consequently, even if live(x1), ...live(xn) are not syntactically produced by the unfolding of live(f(x1,...,xn)), they are implied by the obtained formula.

% This approach shows that the verification problem for $\sgds$s can be
% reduced to a verification problem directly posed over the Relational
% Layer only. 

\end{proof}

\section{SAS Instantiation: The OBGSM Tool}
\label{sec:obgsmtool}

In this section, we show how the formal framework of $\sgds$s can be concretely instantiated, in the case where the transition relation at the Relational Layer is obtained from artifacts specified using the Guard-Stage-Milestone (GSM) approach. In particular, we show how existing techniques and tools have been suitably combined into a tool, called  \emph{\systemName}, which enables the verification of GSM-based artifacts with a Semantic Layer.

% We now concretize the formal framework described in Section \ref{sec:sas} by relying on Guard-Stage-Milestone (GSM) approach for Artifact Layer modeling, recently proposed by IBM \cite{Damaggio:2011:EIF:2040283.2040315} and included by the Object Management Group (OMG) into the new standard for Case Management Model and Notation (CMMN) \cite{cmmn:standard2013}. 
%

\smallskip
\noindent
\textbf{GSM Overview.}
For the sake of space, here we provide a general overview of the GSM
methodology and we refer to \cite{HDDF*11,DaHV13} for more detailed and formal
definitions.  GSM is a declarative modeling framework that has been designed to
simplify design and maintenance of business models.  The GSM information model
uses (possibly nested) attribute/value pairs to capture the domain of interest.
The key elements of a lifecycle model are \emph{stages}, \emph{milestones} and
\emph{guards} (see Figure~\ref{fig:claimingstage}).  Stages (represented as
rounded rectangles) are possibly hierarchical clusters of activities, intended
to update and extend the data of the information model. They are associated to
milestones (circles), business operational objectives which can be achieved
while the stage is under execution. Each stage has one or more guards
(diamonds), which control the activation of stages and, like milestones, are
described in terms of data-aware expressions, involving conditions over the
artifact information model.

\smallskip
\noindent
\textbf{The Ontology-Based GSM (OBGSM) Tool. } \xspace
\begin{comment}
\todo[inline]{\textbf{TODO:} \\
- Briefly introduce GSM (or put it in the preliminary?) \\
- Briefly Explain about GSMC and the formula it accepts?\\
- Explain about the temporal property we consider in OBGSM ($CTL_?$)
\\
- Describe the OBGSM architecture ( The components and their
interactions ). I'm not sure if we have enough space to describe the
input temporal property and input mapping syntax. In fact, we might
not need it, put it on some kind of technical report?
}
\end{comment}
%
%
% In Section~\ref{sec:sas}, we have introduced $\sgds$s as a formal
% framework for representing artifact systems with a Semantic
% Layer. In particular, we have focused our attention to the usage of
% lightweight Description Logics, belonging to the DL-Lite family, to
% conceptually capture the relevant domain entities and relationships
% at the Semantic Layer. Furthermore, in
% Section~\ref{sec:verification}, we have shown that, thanks to the FO
% rewritability of DL-Lite, verification of temporal properties and
% dynamic laws over the evolution of an artifact system understood
% through the lens of the Semantic Layer can be faithfully reduced to
% verification of properties directly carried out at the Artifact
% Layer.
%
%We continue the investigation of artifact systems equipped with a
%Semantic Layer, and show how the developed techniques can be exploited
%to effectively attack the verification problem, leveraging on:
\systemName exploits two already existing tools to provide its functionalities:
\begin{inparaenum}[\it (i)]
\item \ontop\footnote{\url{http://ontop.inf.unibz.it/}}, a JAVA-based
  framework for OBDA.
  % , and in particular the Quest reasoner, which is the component
  % dedicated to handle query rewriting and unfolding;
%
\item the \emph{GSMC model checker}, developed within the EU FP7 Project
  ACSI\footnote{``Artifact-Centric Service Interoperation'', see
   \url{http://www.acsi-project.eu/}} to verify GSM-based artifact-centric
  systems against temporal/dynamic properties \cite{BeLP12b}.
  % \todo{should we cite their ICSOC paper instead of ACSI deliverable
  % and should we say it's developed within ACSI?}.
%
\end{inparaenum}
An important observation is related to semantic governance in this
setting: since the construction of the RTS for GSM is handled
internally by GSMC, it is not possible (at least for the time being)
to prune it so as to remove inconsistent states. Therefore, in the
following we assume that all the states in the RTS are consistent with
the constraints of the Semantic Layer. This can be trivially achieved
by, e.g., avoiding to use negative inclusion assertions in the TBox,
which are the only source of inconsistency for OWL~2~QL. A more
elaborated discussion on this topic is included in Section~\ref{sec:discussion}.

The main purpose of \systemName is: given a temporal property specified over
the Semantic Layer of the system, together with mapping assertions whose
language is suitably shaped to work with GSMC, automatically rewrite and
unfold the property by producing a corresponding translation that can be directly
processed by the GSMC model checker.  This cannot be done by solely relying on
the functionalities provided by \ontop, for two reasons:
\begin{inparaenum}[\it (i)]
\item \systemName deals with temporal properties specified in a fragment of
  $\muladl$, and not just (local) ECQs;
\item the mapping assertions are shaped so as to reflect the specific query
  language supported by GSMC, guaranteeing that the rewriting and unfolding
  process produces a temporal property expressed in the input language of GSMC.
\end{inparaenum}
In particular, we note that GSMC is not able to process the entire
$\muladom$ logic, but only its CTL fragment, denoted by \ctla. This
requires also to restrict the $\muladl$ verification formalism
accordingly, in particular focusing on its CTL fragment, denoted by \ctladl.

As depicted in the Figure~\ref{fig:SystemArchitecture},
% \todo{I'll try to make more compact picture for this},
the workflow of \systemName is as follows:
\begin{inparaenum}[\it (i)]
\item The tool reads and parses the input
conceptual temporal property $\Phi$, the input ontology, and the input
mapping declaration $\mathcal{M}$.
\item It then \emph{rewrites} the input conceptual temporal property
  $\Phi$ based on the input ontology (TBox) in order to compile away
  the TBox.
%
% , in order to compile away the TBox.
%
  This step produces a \emph{rewritten temporal property}
  $\textsc{rew}(\Phi,\TBox)$.
\item The rewritten property  $\textsc{rew}(\Phi,\TBox)$ is \emph{unfolded} by
  exploiting $\M$.  The final temporal property $\Phi_{GSM} =
  \textsc{unfold}(\textsc{rew}(\Phi,\TBox), \mathcal{M})$
  obeys to the syntax expected by GSMC, and is such that verifying $\Phi$
  over the transition system of the GSM model under study
  after projecting its states into the Semantic Layer through $\M$,
  is equivalent to verifying $\Phi_{GSM}$ directly over the GSM model
  (without considering the Semantic Layer).
\item GSMC is invoked by passing $\Phi_{GSM}$ and the
  specification file of the GSM model under study.
\end{inparaenum}

Notice that the correctness of the translation is guaranteed by the fact that \systemName manipulates the local components of the query $\Phi$
according to the standard rewriting and unfolding algorithms, while
maintaining untouched the temporal structure of the property. This has
been proven to be the correct way of manipulating the property
(cf.~Theorem~\ref{thm:verification}). Furthermore, since the temporal
component of the formula is left untouched, it is guaranteed that
$\ctladl$ properties are rewritten and unfolded into corresponding
$\ctla$ properties.
%(cf.~Section~\ref{sec:rew-unfold-laws}).
% %The proof has been done for $\mulpdl$ and $\muladl$, and
% Since first-order $CTL$ with active domain quantification is a
% fragment of $\muladl$, the result directly applies also in our
% setting. This result also shows that \systemName can largely rely on
% state-of-the-art existing rewriting and unfolding techniques to
% manipulate the temporal properties.
%
% Indeed, \systemName exploits \ontop to accomplish this task, adding
% a last step to deal with the constructs that have been introduced
% for the mapping assertions, but that are not directly supported by
% GSMC. In particular, ``get'' statements are turned into
% corresponding existential quantifications, whereas variable
% assignments are managed by directly applying the constraints
% involving these variables to the attributes of the artifact
% information models, finally removing such additional variables.

%The implementation of \systemName is done in java. For some
%functionalities, it uses some components from \marianoSystem
%\footnote{\url{http://ontop.inf.unibz.it}} (An Ontology Based Data Access
%(OBDA) system which has \Quest \cite{OWLED-2012-obda} as it reasoner).
% The system architecture of \systemName is depicted in the
% Figure~\ref{fig:SystemArchitecture}.

\begin{figure}[tbp]
  \centering
  \hspace*{-3mm}\includegraphics[width=1.05\textwidth]{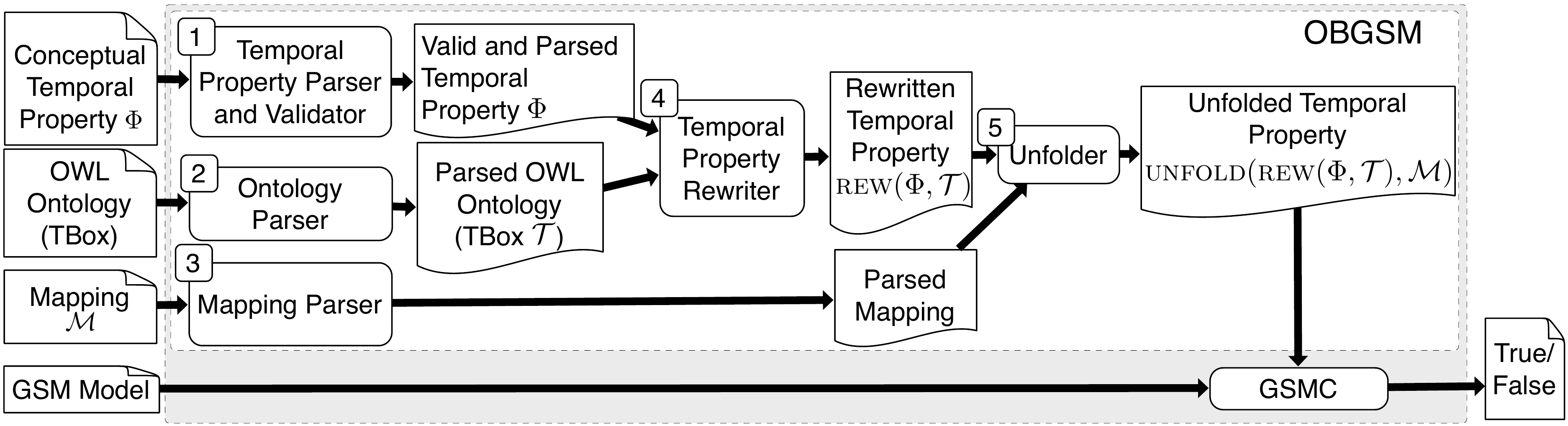}
  % \vspace*{-6mm}
  \caption{\systemName System Architecture \label{fig:SystemArchitecture}}
\end{figure}

%\smallskip
%\noindent
%\textbf{Specification of Conceptual Temporal Properties. }\xspace 

The verification language of \systemName relies on CTL, in accordance to the
input verification language of GSMC \cite{BeLP12b,ACSI-D2.2.3} with some
adjustment. Since we are querying over an OWL ontology, we rely for the local
query language on SPARQL, in accordance to the query language supported by
\ontop. More specifically, the syntax is as follows:

\vspace{-3mm}
{\footnotesize{
\begin{equation*}
\begin{split}
  \formula ~::=&~ [~\query~]
~\mid~ ( \formula )
~\mid~ \andTemp{\formula_1}{\formula_2}
~\mid~ \orTemp{\formula_1}{\formula_2}
~\mid~ \implTemp{\formula_1}{\formula_2}
~\mid~ \notTemp~\formula
~\mid~ \AG~\formula
~\mid~ \EG~\formula \\
~\mid&~ \AF~\formula
~\mid~ \EF~\formula
~\mid~ \AX~\formula
~\mid~ \EX~\formula
~\mid~ \Auntil{\formula_1}{\formula_2}
~\mid~ \Euntil{\formula_1}{\formula_2}  \\
%~\mid&~ \forallTemp \variables . \formula \\
% ~\mid&~ \existsTemp \variables . \formula
~\mid&~ \forallTemp\ \variables\ .\ \forallQuantification
~\mid~ \existsTemp\ \variables\ .\ \existsQuantification \\
\end{split}
\end{equation*}
\begin{equation*}
\begin{split}
\forallQuantification ~::=&~ \implTemp{[~\query~]}{\formula}
~\mid~ \forallTemp\ \variables\ .\ \forallQuantification
~\mid~ [~\query~] \\
\existsQuantification ~::=&~ \andTemp{[~\query~]}{\formula}
~\mid~ \existsTemp\ \variables\ .\ \existsQuantification
~\mid~ [~\query~] \\
\end{split}
\end{equation*}
}}%
where ``$\notTemp$'' denotes the negation, and $[\query]$ is
a %and $[\guard]$ are
\sparql \footnote{\url{http://www.w3.org/TR/sparql11-query/}} \sparqlSELECT
query, which only allows for:
\begin{inparaenum}[\it (i)]
\item prefixes declarations,
\item variables selections, and
\item a \sparqlWHERE clause that only contains triple patterns and some filters
  on variable/constant comparison.
\end{inparaenum}

The semantics of the temporal operators is the one of CTL \cite{ClGP99}.
% \cite{BaKG08}.
Additionally, we also impose the following restrictions on first-order
quantification:
\begin{inparaenum}[\it (i)]
\item Only closed temporal formulae are supported for verification.
\item Each first-order quantifier must be ``guarded'' in such a way that it
  ranges over individuals present in the current active domain.  This active
  domain quantification is in line with GSMC, and also with the \muladl
  logic. As attested by the grammar above, this is syntactically guaranteed by
  requiring quantified variables to appear in $[\query]$ according to the
  following guidelines:
  \begin{inparaenum}[\it (a)]
  \item $\forall \vec{x}. \query(\vec{x}) \limp \phi$
  \item $\exists \vec{x}. \query(\vec{x}) \land \phi$.
  \end{inparaenum}
%
% \[
%\begin{array}{l}
%\forall \vec{x}. \query(\vec{x}) \limp \phi\\
%\exists \vec{x}. \query(\vec{x}) \land \phi\\
%\end{array}
%\]
%
\item Quantified variables must obey to specific restrictions as
  follows: for each variable $y$ ranging over values, there must be at
  least one variable $x$ that ranges over object terms and that
  appears in the first component of the corresponding attribute (i.e.,
  $Attr(x,y)$ is present in the query, with $Attr$ being an attribute
  of the ontology), such that $x$ is quantified ``before'' $y$. For
  example, $\forall x. C(x) \implies \exists y. Attr(x.y)$ satisfies
  this condition, whereas $\exists y \exists x. Attr(x,y)$ does not.
\end{inparaenum}
%
%the quantification is restricted only into the current active domain.
%
%the quantification is restricted only to the quantification over
% \emph{objects terms} \todo{refer to object terms explanation in
% D2.3?} in the current \emph{active domain} \footnote{The set of
% individuals appear in the current (virtual) ABox in the
% ontology}. Consequently, quantification over value is not allowed.
%
%
These restrictions have been introduced so as to guarantee that the
conceptual temporal property can be translated into a corresponding
GSMC temporal property. In fact, GSMC poses several restrictions on
the way values (i.e., attributes of artifacts) can be accessed.

%\smallskip
%\noindent
%\textbf{Specification of the Input Mapping. } \xspace

The structure of the \systemName mapping language is mainly borrowed from
\ontop{}.
%
\begin{comment}
  The input mapping is started by the prefixes declaration. It defines
  URI (Uniform Resource Identifier) prefixes that will be used in the
  remainder of the file. An example of a prefix declaration is ``{\tt
    \footnotesize xsd: http://www.w3.org/2001/XMLSchema\#}'' which
  define the prefix ``{\tt \footnotesize xsd:}''.

  Within the mapping, we require that object terms only constructed by
  unary function symbols. As in \marianoSystem, function symbol in the
  mapping is represented by \emph{URI templates} (i.e., a preset
  format for URIs).  For example, the object term $f(x)$ is
  represented as {\tt\footnotesize
    <http://www.obgsm.inf.unibz.it/example/\#f\{x\}>}.
\end{comment}
%
%
% As the usage of function symbol in describing the object terms,
% during the processes, the variable {\tt x} in the URI template might
% be instantiated with a value and then it becomes a URI.
%
Each mapping assertion is described by three components:
\begin{inparaenum}[\it (i)]
\item {\tt \footnotesize mappingId}, which provides a unique identifier for the
  mapping assertion.
\item {\tt \footnotesize target}, which contains the \emph{target
    query} (query over ontology) with the syntax adopted from
  \marianoSystem.
\item {\tt \footnotesize source}, which describes the \emph{source
    query}. For the grammar of the source query we used a modified
  grammar of the GSMC input language \cite{ACSI-D2.2.3}, with
  extensions that allow to ``extract'' artifact identifiers and their
  value attributes, so as to link them to the ontology. For more
  detail about \systemName, please refer to \cite{ACSI-D2.4.2}.
%The extended
%  syntax is:
%
\begin{comment}
{\footnotesize{
\begin{equation*}
\begin{split}
  \expression ~::=&~ \constantGSM
  ~\mid~ \{\variableGSM./path/attributeID\}\ == \ ?\variableGSM \\
  ~\mid&~ \expression\ \aopGSM \ \expression
  ~\mid~ \expression\ \lopGSM \ \expression
  ~\mid~ \{\variableGSM./path/attributeID\} \\
  ~\mid&~ \textbf{GSM.isStageActive}('\variableGSM','stageID') \\
  ~\mid&~ \textbf{GSM.isMilestoneAchieved}('\variableGSM','milestoneID') \\
  ~\mid&~ \variableGSM.attributeID1 \mbox{ -$>$ }
  \textbf{exists}(attributeID2 = \expression) \\
%%\end{split}
%%\end{equation*}
%
%\todo{I'm a bit unsure with the red part}
%
%%\begin{equation*}
%%\begin{split}
  \formula ~::=&~  \expression
~\mid~(\formula)
~\mid~\formula\ \andGSM\ \formula
~\mid~\formula\ \orGSM\ \formula
~\mid~\notGSM\ \formula
~\mid~\existsGSM ('variable',\ 'artifactID')(\formula)\\
~\mid&~\forallGSM ('variable',\ 'artifactID')(\formula)
~\mid~\getGSM('variable',\ 'artifactID')(\formula)\\
\end{split}
\end{equation*}
}}
\end{comment}
%
%###########################################################################################
%###########################################################################################
\end{inparaenum}

\section{Case Study: Energy Scenario}
\label{sec:caseStudy}

%\todo[inline]{\textbf{TODO:} \\
%- Describe the use case in general and our specific domain of interest \\
%- Explain the GSM Model for this use case \\
%- Explain the semantic layer for this use case (explain the ontology
%and the mappings) \\
%- Explain the verification of some temporal property examples
%}

As a case study to demonstrate our approach we refer to the fragment of the
Energy Use Case Scenario developed within the ACSI Project \cite{ACSI-D5.5}.
We show how the Semantic Layer can be exploited in order to facilitate the
specification of temporal properties of interest, and discuss how these are
automatically translated into properties that can be directly verified by GSMC
over the Energy GSM model.
% We test \systemName\ to the Energy scenario which is inspired \todo{Should we
% say ``inspired'' or just say that it is a fragment of the ACSI Energy use
% case} by a fragment of the ACSI Energy use case\todo{reference to ACSI energy
% use case?}.

% \subsection{Energy Scenario at a Glance}\label{sec:EnergyUsecaseScenario}
\smallskip
\noindent
\textbf{Energy Scenario at a Glance.}
%
% For the sake of simplicity, in this section
We sketch the main aspects of the Energy Use Case, and refer
% the interested reader
to \cite{ACSI-D5.5} for further details.
%
% The Energy
This use case focuses on the electricity supply exchange process between
electric companies inside a distribution network.
% in Spain.
The
% electricity
exchange
% between companies
occurs at \emph{control points} (CP). Within a CP, a measurement of electricity
supply exchange takes place in order to calculate the fair remuneration that
the participating companies in the CP should receive. The measurement is done
by a \emph{meter reader company}, which corresponds to one of the companies
pertaining to that particular CP. The measurement results from the CP are then
submitted to the \emph{system operator}, who is in charge of processing the
results and publishing a \emph{CP monthly report}.
% If a participating company has any objection concerning the published
% measurement, the company can raise an objection.
A participating company can raise an objection concerning the published
measurement.  Once all the risen objections are resolved, the report is
closed. The collection of CP monthly reports is then represented as a \emph{CP
 monthly report accumulator}.

In order to implement the sketched scenario using the GSM methodology, we model
the \emph{Control Point Monthly Report $($CPMR$)$} as an artifact which
contains the information about hourly measurements done in a particular CP
within a certain month.  The lifecycle of an instance of this artifact begins
once the hourly measurements are provided, and runs until the liquidation for
the CP measurements has started. The artifact lifecycle consists of three root
stages:
\begin{inparaenum}[\it (i)]
\item \emph{CPMRInitialization}, which is activated when a new instance of the
  CPMR artifact is created.
\item \emph{Claiming}, which handles the submission of measurements, and the
  subsequent reviewing procedure during which objections may be raised.
\item \emph{MeasurementUpdating}, which is activated once an event is raised,
  requesting for updating the measurement results.
\end{inparaenum}

Figure~\ref{fig:GSM Model} shows a GSM model of the \emph{Claiming} stage, on
which we focus here.  It contains the following five sub-stages:
\begin{inparaenum}[\it (i)]
\item \emph{Drafting}, which models the process of submitting and processing
  the CP hourly measurements and processing a draft of the CPMR.
\item \emph{Evaluating}, which is responsible for official measurement
  publication.
\item \emph{Reviewing}, which models the process of either accepting the
  official measurement, or deciding whether the difference is acceptable or
  not. Such decision may result in an objection either by the Metering Office
  (MO) or the Energy Control Department (ECD).
\item \emph{CreateObjection}, which corresponds to creation of a new Objection
  instance.
\item \emph{Closing}, which is responsible for closing the objection
  period. Once this period finishes and there are no requested objections, the
  monthly report is finalized and the \emph{CPMRFinished} milestone is
  achieved.
\end{inparaenum}
The collection of CP monthly reports for a specific month is then modeled by
the \emph{CPMR Monthly Accumulator $($CPMRMA$)$} artifact.

\begin{figure}[btp]
  \label{fig:claimingstage}
  \centering
  \includegraphics[width=\textwidth]{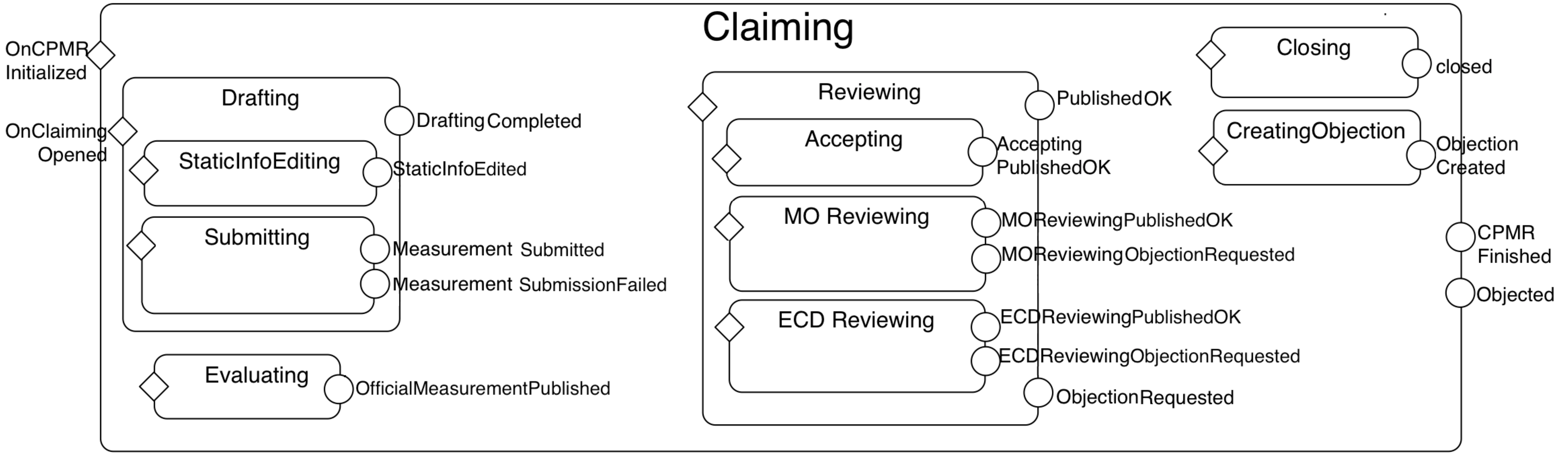}
  \caption{The Claiming Stage in the GSM Model for the CPMR Artifact}
  \label{fig:GSM Model}
\end{figure}

\smallskip
\noindent
\textbf{The Semantic Layer for the Energy Use Case.}
% \subsection{The Semantic Layer for the ACSI Energy Use
% Case}\label{sec:EnergyUsecaseSemanticLayer}
Here we provide a Semantic Layer built on top of the GSM model for the Energy
scenario, restricting our attention to the Control Point
Measurement Report (CPMR).

\smallskip
\noindent
\textbf{The Ontology TBox.}  We consider the following states of a control
point measurement report:
\begin{inparaenum}[\it (i)]
\item a finished CPMR (when the milestone \emph{CPMRFinished} is achieved),
\item a reviewed CPMR (after finishing the review inside the \emph{Reviewing}
  stage)
\item an accepted CPMR (when the milestone \emph{PublishedOK} is achieved)
\item an objected CPMR.
\end{inparaenum}
Following this intuition, a portion of the ontology TBox of the Semantic Layer
is depicted in Figure~\ref{fig:EnergyOntology}, together with its
representation as a UML Class Diagram.
%
% An UML model for the ontology of the Semantic Layer is depicted in
% Figure~\ref{fig:EnergyOntology}.

\begin{figure}[btp]
  %\centering
  \raisebox{-0.6cm}{\includegraphics[width=0.6\textwidth]{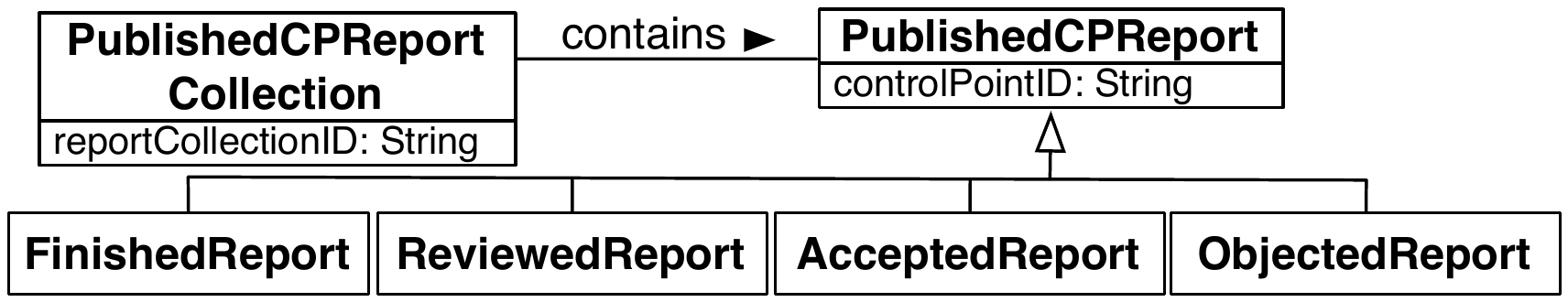}}
  \makebox[1cm][l]{\scriptsize
   $
   \begin{array}{rl}
     \SOMET{\mathit{contains}} &\ISA \mathit{PublishedCPReportColl.}\\
     \SOMET{\INV{\mathit{contains}}} &\ISA \mathit{PublishedCPReport} \\
     \SOMET{\mathit{controlPointID}} &\ISA \mathit{PublishedCPReport}\\
     \SOMET{\INV{\mathit{controlPointID}}} &\ISA \mathit{String}\\
     \mathit{FinishedReport} &\ISA \mathit{PublishedCPReport}\\
     &\cdots\\
     %\mathit{ReviewedReport} &\ISA \mathit{PublishedCPReport}\\
     %\mathit{AcceptedReport} &\ISA \mathit{PublishedCPReport}\\
     %\mathit{ObjectedReport} &\ISA \mathit{PublishedCPReport}
   \end{array}
   $}

  \caption{Ontology for the CPMR reviewing process in the Energy scenario}
  \label{fig:EnergyOntology}
  \vspace{-5mm}
\end{figure}

\smallskip
\noindent
\textbf{The Mapping Assertions.}
%
% We use the following mapping assertions in order to link the
% information model in the GSM model to the Semantic Layer. The
% assertions are written in the mapping specification language of
% \systemName.
%
We use mapping assertions as described in Section~\ref{sec:obgsmtool} in order
to link the GSM information model to the Semantic Layer. Due to the space limitations, we
only give an example of such a mapping in the \systemName mapping specification
language:

\begin{comment}
\noindent {\tt \scriptsize{%
mappingId \hspace*{1 mm} ReviewedReportMapping \\
target \hspace*{5 mm} <"\&:;cpmr/\{\$x\}/"> rdf:type :ReviewedReport . \\
source \hspace*{5 mm} get('x','CPMR')(GSM.isMilestoneAchieved('x','AcceptingPublishedOK') OR \\
\hspace*{25 mm} GSM.isMilestoneAchieved('x','MOReviewingPublishedOK') OR \\
\hspace*{25 mm}
GSM.isMilestoneAchieved('x','ECDReviewingPublishedOK'))\\
\textrm{\normalsize where ``:'' is a prefix declared as
 {\tt\footnotesize http://acsi/example/energy/energy.owl\#}.\\}
}}
\end{comment}

{\scriptsize
\begin{verbatim}
 mappingId  ReviewedReportMapping
 target     <"&:;cpmr/{$x}/"> rdf:type :ReviewedReport .
 source     get('x','CPMR')(GSM.isMilestoneAchieved('x','AcceptingPublishedOK') OR
                       GSM.isMilestoneAchieved('x','MOReviewingPublishedOK') OR
                       GSM.isMilestoneAchieved('x','ECDReviewingPublishedOK'))
\end{verbatim}
}%
\noindent
% where ``:'' is a prefix declared as {\tt\footnotesize
% http://acsi/example/energy/energy.owl\#}.
%
Intuitively, this mapping populates the $\mathit{ReviewedReport}$ concept with
the CPMR artifact instances, given the achievement of either of the milestones:
\emph{AcceptingPublishedOK}, \emph{MOReviewingPublishedOK}, or
\emph{ECDReviewingPublishedOK}. In the Semantic Layer,
$\mathit{ReviewedReport}$ intuitively represents a CPMR that has been
reviewed. In the Artifact Layer, this corresponds to the situation in which the
CPMR has been reviewed and accepted either by the electric company, or by the
metering office (MO), or by the Electric Control Department (ECD). This example
shows how such details can be abstracted away in the Semantic Layer, which does
not show that a $\mathit{ReviewedReport}$ is obtained by a (possibly complex)
chaining of achieved milestones in the underlying GSM model.

%%%%%%%%%%%%%% Removed by Babak because of space issues

% Another example of mapping assertion, which intuitively populates the
% $contains$ role by relating CPMRMA with the CPMRs it contains is as
% follows:

% {\scriptsize
% \begin{verbatim}
%  mappingId  CPMRMAContainsCPRMMapping
%  target     <"&:;cpmrma/{$x}/"> :contains <"&:;cpmr/{$y}/"> .
%  source     get('x','CPMRMA')(get('y','CPMR')(
%                          x./CPMRMA/CPMRDATA->exists(CPMRID == {y./CPMR/ID})))
% \end{verbatim}
% }

\smallskip
\noindent
\textbf{Verification within the Energy Use Case.}
% \subsection{Verification within the ACSI Energy Use
% Case}\label{sec:ExperimentEnergyVerification}
We demonstrate now how the presence of the Semantic Layer may help when
specifying different temporal properties of interest.  For example, consider a
property saying that: \emph{``All control point monthly reports will eventually
 be finished''}.  This can be expressed as a conceptual temporal property over
the Semantic Layer defined above:
\[
  \AG (\forallTemp\ x.\ ([\mathit{PublishedCPReport}(x)] \implTempB \EF
  [\mathit{FinishedReport}(x)]) )
\]
Then it can be encoded in the machine-readable SPARQL-based verification
language, defined in Section \ref{sec:obgsmtool}, as follows:

%{\tt \fontsize{7.7}{7.7}\selectfont
%\hspace*{0.5mm} AG (FORALL ?x.([PREFIX : <http://acsi/example/ACSIEnergy/ACSIEnergy.owl\#> \\
%\hspace*{25.5mm} SELECT ?x WHERE { ?x a :ControlPointReport}] -> \\
%\hspace*{19mm} EF [PREFIX : <http://acsi/example/ACSIEnergy/ACSIEnergy.owl\#> \\
%\hspace*{25.5mm} SELECT ?x WHERE { ?x a :FinishedReport}] \\
%\hspace*{15mm} ));
%}%
%
{\fontsize{7.7}{7.7}\selectfont
\begin{verbatim}
AG (FORALL ?x.([PREFIX : <http://acsi/example/ACSIEnergy/ACSIEnergy.owl#>
                SELECT ?x WHERE { ?x a :PublishedCPReport}] ->
            EF [PREFIX : <http://acsi/example/ACSIEnergy/ACSIEnergy.owl#>
                SELECT ?x WHERE { ?x a :FinishedReport}]
          ));
\end{verbatim}
}

We now show how this kind of high-level properties are processed by \systemName
into temporal properties formulated over the GSM model and which can be fed to
the GSMC model checker.  Notice that without the Semantic Layer and
\systemName, the user would have to construct the low-level properties
manually.

We first perform a rewriting step, which expands concepts contained in the
property of interest using the ontology TBox. In our example, this results in
the conceptual property
$\AG (\text{PCR}(x) \implTempB \EF [\mathit{FinishedReport}(x)]))$,
in which:
%{\footnotesize
\[
  \text{PCR}(x) =
  \begin{array}[t]{@{}l}
    \existsTemp\ y\ .\
    [\mathit{hasControlPointID}(x,y)] \orTempB
    [\mathit{ObjectedReport}(x)] \orTempB \\{}
    [\mathit{AcceptedReport}(x)] \orTempB
    [\mathit{PublishedCPReport}(x)] \orTempB
    [\mathit{ReviewedReport}(x)] \\{}
    \orTempB [\mathit{FinishedReport}(x)] \orTempB
    (\existsTemp\ z\ .\ [\mathit{contains}(z,x)])
  \end{array}
\]
%}%
% The example of the rewriting steps result above exhibit the reasoning
% capability done by \systemName in order to take into account the knowledge
% encoded in the ontology within the verification.
In particular, the rewriting step employs reasoning over the \dlliter TBox by
taking into account the following knowledge contained in the Semantic Layer:
\begin{inparaenum}[\it (i)]
\item Those objects that have a control point ID (i.e., are in the domain of the
  attribute $\mathit{hasControlPointID}$), are instances of
  $\mathit{PublishedCPReport}$;
\item $\mathit{ObjectedReport}$ is a $\mathit{PublishedCPReport}$;
\item $\mathit{AcceptedReport}$ is a $\mathit{PublishedCPReport}$;
\item $\mathit{ReviewedReport}$ is a $\mathit{PublishedCPReport}$;
\item Those objects that are in the range of the role $contains$ are
  instances of $\mathit{PublishedCPReport}$;
\item $\mathit{FinishedReport}$ is a $\mathit{PublishedCPReport}$.
\end{inparaenum}

\begin{figure}[tp]
  \centering
  \footnotesize
\[
  % AG (  forall( 'x33', 'CPMR' )( !  (
  \AG\ ( \forallGSM ('x',
  \begin{array}[t]{@{}l}
    \ '\mathit{CPMR}')(!(\GSMisMSachieved('x','\mathit{Objected}') \orTempB\\
    % \GSMisMSachieved('x33','Objected') ||
    \GSMisMSachieved('x','\mathit{ObjectionRequested}')  \orTempB\\
    % \GSMisMSachieved('x33','\mathit{ObjectionRequested}')  ||
    \GSMisMSachieved('x','\mathit{ObjectionCreated}') \orTempB\\
    % \GSMisMSachieved('x33','\mathit{ObjectionCreated}') ||
    \GSMisMSachieved('x','\mathit{MOReviewingObjectionRequested}') \orTempB\\
    % \GSMisMSachieved('x33','\mathit{MOReviewingObjectionRequested}')  ||
    \GSMisMSachieved('x','\mathit{ECDReviewingObjectionRequested}')   \orTempB\\
    % \GSMisMSachieved('x33','\mathit{ECDReviewingObjectionRequested}')   ||
    \GSMisMSachieved('x','\mathit{PublishedOK}') \orTempB\\
    % \GSMisMSachieved('x33','\mathit{PublishedOK}') ||
    % ( false ) \orTempB\\
    % ( true ) ||\\
    \GSMisMSachieved('x','\mathit{AcceptingPublishedOK}') \orTempB\\
    % \GSMisMSachieved('x33','\mathit{AcceptingPublishedOK}') ||
    \GSMisMSachieved('x','\mathit{MOReviewingPublishedOK}') \orTempB\\
    % \GSMisMSachieved('x33','\mathit{MOReviewingPublishedOK}') ||
    \GSMisMSachieved('x','\mathit{ECDReviewingPublishedOK}') \orTempB\\
%\GSMisMSachieved('x33','\mathit{ECDReviewingPublishedOK}') ||
    \existsGSM( 'y', '\mathit{CPMRMA}' )  ( y./\mathit{CPMRMA}/\mathit{CPMRDATA} \implTempB \\
    \hspace{2cm}\textbf{exists}(\mathit{CPMRID} == {x./\mathit{CPMR}/\mathit{ID}}) )  \orTempB\\
    % exists('x34', '\mathit{CPMRMA}') (x34./CPMRA/CPMRDATA->exists(CPMRID == {x33./CPMR/ID}) )  ||
    \GSMisMSachieved('x','\mathit{CPMRFinished}')) \orTempB\\
    % \GSMisMSachieved('x33','\mathit{CPMRFinished}'))   ||
    EF  ( \GSMisMSachieved('x','\mathit{CPMRFinished}'))))\\
    % EF  ( \GSMisMSachieved('x33','\mathit{CPMRFinished}'))))
  \end{array}
\]
  \vspace{-5mm}
  \caption{Resulting temporal property over the GSM model}
  \label{fig:example-property}
  \vspace{-3mm}
\end{figure}

%Notice that the absence of a mapping assertion for the concept
%$ControlPointReport$ results into a $false$ disjunct in the unfolding.

We then perform the unfolding step, which exploits the mapping assertions in
order to reformulate the rewritten property to the temporal property over the
GSM model.
Such property, shown in Figure~\ref{fig:example-property}, can be now be fed to
the GSMC model checker in order to validate the initial property claiming that
\emph{``all control point monthly reports will eventually be finished''}.
Given the significant difference in complexity between the property formulation
over the Semantic Layer and its corresponding translation over GSM, the
advantage of the presence of the Semantic Layer becomes apparent even for this
simple case study.  Conceptual temporal property formulation allows to hide
low-level details from the user and enables the modeler to rather focus on the
domain under study using the vocabulary he/she is familiar with.

%  see from the examples above that the existence of semantic
% layer simplified the way how we can express the property that we want
% to verify. Not only on the compactness of the property that we need to
% write, the presence of semantic layer also ease us to specify the
% temporal property by abstracting away many low level details,
% additionally it provides the intuitive vocabularies based on our
% domain interest.

\section{Discussion and Conclusion}
\label{sec:discussion}

% In this paper we propose a framework for modeling artifact systems
% using novel expressive languages for artifact-centric systems such as
% GSM, which provides domain independent conceptual view of the system
% expressed using OWL 2 QL, and connected to the original system using
% mapping specifications. The ontology provides a high-level
% understanding of the system, which can be exploited to verify the
% correct behavior of the system. The queries and the properties to be
% verified over such system are expressed over the vocabulary of the
% high-level semantic layer, and using the provided techniques and tool
% will be translated to queries over the low-level artifact system.

% In this paper we propose a framework for modeling artifact systems, and
% connecting the models to a domain independent conceptual view of the system
% expressed as an OWL 2 QL ontology.
%%
In this paper we have introduced a framework in which an artifact system based
on the GSM model is connected through declarative mappings to a Semantic Layer
expressed as an OWL~2~QL ontology, and temporal properties expressed over the
Semantic Layer are reformulated in terms of the underlying GSM model and
verified over its execution, using state of the art OBDA and model checking
technology.
We have assumed in our development
% that all states generated by the execution of the artifact system are
% consistent with the Semantic Layer.  Indeed, such states are not directly
% accessible during verification, given that they are generated by the GSMC
% model checker, which we use as a black box.
that all states generated for the purpose of verification by the GSMC model
checker (which we use as a black box), are consistent with the Semantic Layer.
Indeed, inconsistent states could not be easily pruned away, as e.g., the
approach proposed in \cite{CDLMS12} would require.
Notice that in our case study presented in Section~\ref{sec:caseStudy} such
assumption holds since the TBox does not contain any disjointness assertions.
% and consequently the underlying artifact system cannot generate states that
% are inconsistent w.r.t.\ the TBox,

However, in general, the consistency of the states at the artifact level with
the TBox in the Semantic Layer is not guaranteed.  In such a situation the
strategy proposed here, based on delegating the verification of \ctla
properties to the existing model checker, cannot be followed directly.  One
possible solution to this is to provide a mechanism for the model checker to
exploit the ontology at the Semantic Layer to distinguish consistent from
inconsistent states of the low-level system.  The other possible solution is to
consider less expressive temporal logics, i.e., fragments of \ctla, and
investigate whether the check for consistency can be embedded in the formula to
verify.  These latter scenarios provide us with interesting problems for future
investigation.

% \section{Conclusion}
% \label{sec:conclusion}

% \endinput

%%% Local Variables:
%%% mode: latex
%%% TeX-master: "main"
%%% save-place: t
%%% End:

%\subsubsection*{Acknowledgements.}
%This research has been partially supported by the ICT Collaborative
%Project ACSI (Artifact-Centric Service Interoperation), funded by the
%EU under FP7 ICT Call 5, 2009.1.2, grant agreement No.~FP7-257593.

%\bibliographystyle{abbrv}
\bibliographystyle{splncs03}
%\bibliography{short-string,krdb,acsi-bib}
\bibliography{main-bib}

\end{document}

%%% Local Variables:
%%% mode: latex
%%% TeX-master: t
%%% save-place: t
%%% End: